\documentclass[letterpaper]{article} 
\usepackage{aaai2027} 
\usepackage[hyphens]{url} 
\usepackage{graphicx} 
\usepackage{natbib} 
\usepackage{caption} 
\usepackage{microtype}
\usepackage{amsmath}
\usepackage{amssymb}
\usepackage{amsthm}
\usepackage{booktabs}
\usepackage{array}
\usepackage{algorithm}
\usepackage{algorithmic}

\floatstyle{ruled}
\restylefloat{algorithm}

\newcolumntype{P}[1]{>{\raggedright\arraybackslash}p{#1}}
\nocopyright
\title{HALO: Heterogeneous Admission through Localized Obligations for Safe Agentic Execution}

\author{
Taewoo Park,
Kyeonghyun Yoo,
Kiseok Kim,
Seunghyun Yoo,
and Hwangnam Kim\thanks{Corresponding author: hnkim@korea.ac.kr}
}

\affiliations{
Korea University\\
Seoul, Republic of Korea\\
\texttt{\{taewoopark, seven1705, kisuk528, seunghyunyoo, hnkim\}@korea.ac.kr}
}

\newcommand{\gazebo}{Gazebo}
\newcommand{\pxfour}{PX4}
\newcommand{\tablefont}{\footnotesize}
\newtheorem{lemma}{Lemma}
\newtheorem{corollary}{Corollary}
\newtheorem{proposition}{Proposition}
\newtheorem{theorem}{Theorem}
\newcommand{\method}{HALO}
\begin{document}
\maketitle

\begin{center}
\small\textit{Preprint. This manuscript has not yet been peer reviewed.}
\end{center}
\setcounter{dbltopnumber}{4}
\setcounter{topnumber}{4}
\setcounter{totalnumber}{6}
\renewcommand{\topfraction}{0.97}
\renewcommand{\dbltopfraction}{0.97}
\renewcommand{\textfraction}{0.02}
\renewcommand{\floatpagefraction}{0.90}
\renewcommand{\dblfloatpagefraction}{0.90}

\begin{abstract}
Recent agentic AI systems may return a heterogeneous response containing
notices, requests, handoffs, and actions. Conditions can change before external use, so components from the same response need not remain supported together. Rejecting the whole response discards useful components, whereas checking components independently can leave a dependent
without its prerequisite. We present Heterogeneous Admission with Localized Obligations (HALO), a runtime
protocol that preserves supported components whose declared prerequisites also
remain supported, rechecks each exact action before dispatch, and allows
blocked actions to be replaced only by fresh candidates. HALO matched all 96 admission expectations and passed all 20 protocol tests. In structured-response replay, it retained 248/248 supported components, including
128/128 unaffected by unrelated changes, while a whole-response policy retained
0/248. Across ten cold-start PX4/Gazebo sessions, HALO blocked every tested stale route, observed no matching stale setpoint, and completed all fresh recoveries.
\end{abstract}

\section{Introduction}

Large language model (LLM) responses are no longer merely textual. They can now include tool calls, software modifications, and commands for external systems. LLM-based systems that combine language models with these external actions are called agentic AI systems \cite{react2023,toolformer2023,injecagent2024}.

This change creates a new safety problem. A wrong answer may mislead a user, but a wrong action can corrupt software, damage equipment, disrupt critical operations, or put people at risk. Safe execution therefore becomes a responsibility of the runtime, the software layer that controls how an agent interacts with external systems.

An LLM needs time to generate a response. While a response is generated or waits for external use, the external system may continue to change. Sensor data may become outdated, authorization may expire, software state may change, or the physical environment may move. An action may therefore be outdated when generation finishes or before it reaches the controlled interface.

This issue will become more serious as future agents use more tools, produce more complex responses, and coordinate more actions. We call this problem \emph{runtime support drift}. It occurs when the conditions supporting an output change before the output is returned or used.

Runtime support drift is harder to handle when one response contains several outputs. Consider an unmanned aerial vehicle (UAV) agent that returns an operator notice, a sensor report, a flight action, and a status message that depends on that action. If the map or traffic information supporting the flight action changes, the action and its dependent status should be blocked. However, the notice and sensor report may still remain valid.

Simple approaches do not handle this situation well. Rejecting the entire response discards valid outputs. Checking every output independently may leave a dependent output without what it requires. Checking only once may also miss changes before the action reaches the external system.

We present Heterogeneous Admission with Localized Obligations (HALO), a runtime protocol for this problem. HALO treats each output as a separate \emph{component}. It preserves valid components, removes invalid ones, and rechecks each action immediately before \emph{dispatch} to the external system.

When a blocked action can be corrected, HALO records what must be updated and which component may receive a replacement. The old action is not reactivated. A fresh candidate must return with current information and new authorization, then pass the runtime checks again.

We evaluate HALO in UAV operation, where sensor data, geometry, authorization, and vehicle state can change quickly. This setting makes runtime support drift observable at the PX4 setpoint interface. The same runtime process can apply to other domains when an agent returns multiple components, current support can be checked, and actions pass through a runtime-controlled interface.

Our contributions are:

\begin{itemize}

\item a component-level admission model for heterogeneous responses under runtime support drift;

\item a runtime process combining dependency-consistent retention, exact dispatch, and fresh readmission;

\item layered evidence from structured-response replay, protocol-conformance tests, and PX4/Gazebo execution.

\end{itemize}

\begin{table*}[!t]
\centering
\tablefont
\setlength{\tabcolsep}{4pt}
\renewcommand{\arraystretch}{1.08}
\begin{tabular*}{\textwidth}{@{\extracolsep{\fill}}
P{0.21\textwidth}
P{0.18\textwidth}
P{0.25\textwidth}
P{0.28\textwidth}@{}}
\toprule
Research line & What it governs & Runtime role & Relation to HALO \\
\midrule

Guardrails / schemas
& Output or tool call
& Content and interface constraints
& Supply typed interfaces \\

AgentSpec
& Protected operation
& Programmable runtime decision
& HALO adds dependencies, late rechecks, and recovery \\

SAFEFLOW / Mnemosyne
& Workflow or transaction
& Coordination, commitment, and repair
& Govern broader workflow or transaction state \\

Freshness / assurance
& Evidence or controller action
& Freshness and downstream checks
& Supply component-specific support inputs \\

\textbf{HALO}
& \textbf{Component within one response}
& \textbf{Partial retention, final dispatch gate, scoped recovery}
& \textbf{Defines the proposed component runtime process} \\

\bottomrule
\end{tabular*}
\caption{Related approaches by what they govern and where they act in the runtime process.}
\label{tab:related-map}
\end{table*}

\section{Related Work}

Table~\ref{tab:related-map} compares prior approaches by what they govern and
where they act. Existing work controls generated content, individual
operations, larger workflows, or downstream safety signals. HALO instead
focuses on several dependent components returned together in one response.

Guardrails, agent-security benchmarks, sandboxed evaluation, and tool schemas
constrain generated content or external interfaces
\cite{toolemu2024,injecagent2024,llamaguard2023,nemoguardrails2023,
openai_function_calling,mcp_tools}. They help prevent invalid outputs from
reaching external systems, but do not decide which parts of one response should
remain when changing conditions affect only some components.

AgentSpec provides programmable checks for protected operations
\cite{agentspec2026} and is the closest executable runtime-policy comparator.
With matched inputs and current state, AgentSpec-Scoped reproduces HALO's
admission-time decision. HALO additionally preserves declared dependencies,
binds authorization to one exact action, rechecks it at the controlled
interface, and limits how a blocked component may return.

SAFEFLOW coordinates multi-agent information flow and transactions
\cite{safeflow2025}, while Mnemosyne studies admission, commitment, and repair
for generated workflow actions \cite{mnemosyne2026}. These systems govern
broader workflow or transaction state. HALO instead governs the components
within one response and preserves the still-valid subset without leaving
unsupported dependents.

Freshness metrics, robotic affordance checks, runtime assurance, shielding, and
control verification provide support signals or downstream checks
\cite{yates2021aoi,saycan2023,codeaspolicies2023,soter2019,
seto1998simplex,alshiekh2018shielding,ames2019cbf,
parasuraman2000model}. HALO uses these results as current support for individual
components rather than replacing them.

Selective agentic recovery for UAV autonomy decides when to invoke an external
reasoner and validates one returned recovery choice \cite{park2026pmr}. HALO
instead governs which components remain, which dependents are removed, which
exact actions may be dispatched, and how blocked components may re-enter.
Because these systems govern different objects and stages, we use AgentSpec and mechanism-removal variants to isolate HALO's runtime process.

\section{Method}

Figure~\ref{fig:halo-overview} turns the problem introduced in the
Introduction into HALO's runtime process. The figure follows one response from
its components to admission, dispatch, and recovery. HALO preserves outputs
that remain supported and removes outputs whose prerequisites are missing. It then rechecks each action before it crosses the controlled interface. A recoverable action
may return only as a fresh candidate.

\begin{figure*}[!t]
\centering
\includegraphics[width=\textwidth]{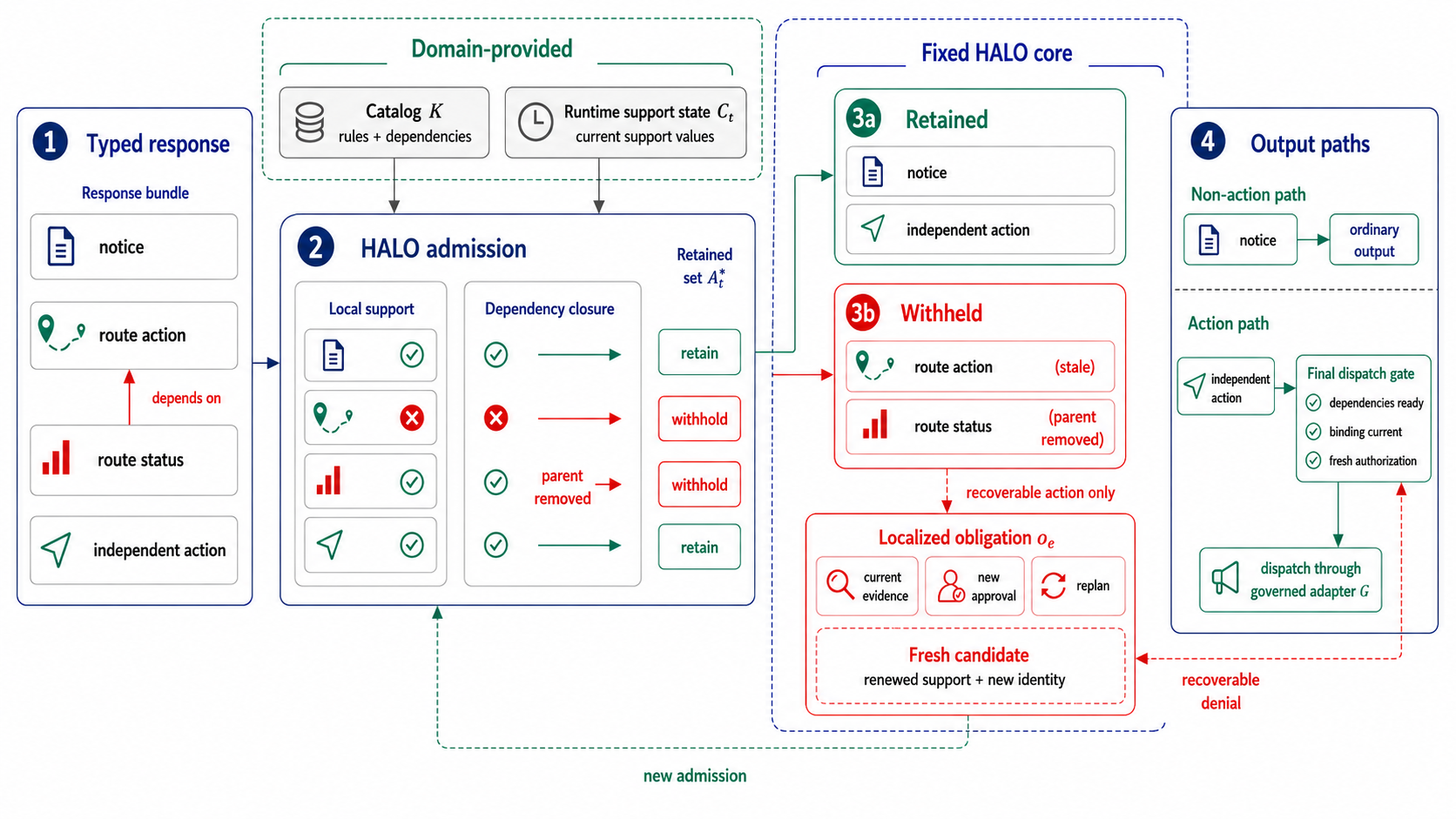}
\caption{HALO's runtime process for one structured response. HALO preserves
supported components under declared prerequisites, sends eligible actions to
the final dispatch gate, and allows blocked actions to be replaced only by fresh
candidates.}
\label{fig:halo-overview}
\end{figure*}

\subsection{Runtime Overview}

The agent returns one structured response. Each output has a declared type,
payload, identifier, and optional prerequisite references. We call each output
a \emph{component}, and write the complete response as $\widehat R$.

Figure~\ref{fig:halo-overview} also shows the running UAV example used
throughout this section. The response contains an operator notice, an
independent sensor report, a flight action, and a status message released only
after that action is dispatched. We denote them by $e_1$, $e_2$, $e_3$, and
$e_4$, respectively. Thus, $e_4$ requires $e_3$, while $e_1$ and $e_2$ remain
independent.

HALO does not infer types or prerequisites from free text. The agent proposes
them in the structured response, and HALO checks them against trusted rules.
A response object is \emph{well-formed} only when its type, identifier, payload,
and prerequisite references follow those rules. Malformed references, cycles,
and unsupported component types are rejected rather than guessed or silently
repaired.

\paragraph{Trusted rules.}
A trusted \emph{domain catalog} $K$ is a rule set defined outside the agent. It
defines the allowed component types, required prerequisites, support checks,
recovery routes, and output paths for one external interface. Trusted
\emph{state providers} supply the current evidence, authorization, versions,
provenance, approvals, and operating conditions needed by those checks. We
write this current snapshot as $C_t$.

The \emph{adapter} $G$ is the controlled software interface that sends an
authorized action to the external system. In the UAV setting, the adapter
publishes an authorized PX4 setpoint. A component that does not request an
external state change is a \emph{non-action}. HALO \emph{emits} it through its
configured output path. HALO \emph{dispatches} an action only through the
adapter. HALO does not replace sensing, planning, geometric verification, or
downstream vehicle control.

\paragraph{Component-specific support.}
Not every change matters to every component. A map update may affect the
flight action $e_3$, but not the operator notice $e_1$. We call the fields of
$C_t$ that matter to component $e$ its \emph{support footprint}, written
$\pi_e$. This lets HALO react only to changes that are relevant to each
component. The component-specific state is
\begin{equation}
\label{eq:footprint}
C_t^e=\pi_e(C_t).
\end{equation}
The catalog then checks whether the component is supported under the current
state:
\begin{equation}
\label{eq:local-support}
L_{C_t}(e)=\operatorname{LocalOK}_K(e,C_t^e).
\end{equation}
A change outside $\pi_e$ does not affect $e$. A change inside $\pi_e$ causes
HALO to reconsider that component and any components that require it.

\paragraph{Trust model.}
HALO trusts catalog $K$, faithful state providers, adapter $G$, and a trusted
\emph{authorization ledger}. The ledger records whether the authorization for
a dispatch attempt is fresh or has already been used. Returned payloads and
prerequisite references are untrusted. These assumptions define the boundary
within which HALO's guarantees apply. HALO's claims cover only the declared
response graph, trusted current-state inputs, and actions that cross the
enforced adapter.

Table~\ref{tab:halo-objects} provides a map of the notation used below.
Admission uses $\pi_e$ and $A_t^\star$, dispatch uses $b_e$ and $G$, and
recovery uses $o_e$.

\begin{table*}[!t]
\centering
\tablefont
\setlength{\tabcolsep}{5pt}
\renewcommand{\arraystretch}{1.08}
\begin{tabular*}{\textwidth}{@{\extracolsep{\fill}}
P{0.20\textwidth}P{0.18\textwidth}P{0.57\textwidth}@{}}
\toprule
Layer / object & Provided by & Role \\
\midrule

Response: $\widehat R$
& Agent
& Contains typed components and prerequisite references. \\

Response graph: $G_R$
& HALO
& Stores validated prerequisite relations and phases. \\

Catalog: $K$
& Domain designer
& Defines schemas, checks, paths, and recovery rules. \\

Current state: $C_t$
& State providers
& Supplies current evidence, authorization, and conditions. \\

Support footprint: $\pi_e$
& Catalog $K$
& Selects state fields relevant to component $e$. \\

Retained set: $A_t^\star$
& HALO
& Largest supported set closed under declared prerequisites. \\

Action binding: $b_e$
& HALO
& Binds one action instance to one authorized dispatch. \\

Adapter: $G$
& Domain integration
& Sends an action after final dispatch authorization. \\

Obligation: $o_e$
& HALO
& Records required updates and the allowed replacement. \\

\bottomrule
\end{tabular*}
\caption{Main objects in HALO's runtime process.}
\label{tab:halo-objects}
\end{table*}

\subsection{Component-Level Admission}

Table~\ref{tab:halo-objects} maps admission to the current state.
The support footprint $\pi_e$ selects the fields that matter to component $e$,
while $A_t^\star$ denotes the set retained by HALO.

Admission answers the first runtime question: \emph{which components may
remain together under the current state?} A component that passes admission is
\emph{retained}. Retention means that the component may remain in the runtime;
it does not yet authorize an action to be dispatched. A component that fails
admission is \emph{withheld}, meaning that HALO prevents it from reaching its
configured output or action path.

The agent proposes prerequisite references, and HALO validates their types,
directions, and required processing phases against catalog $K$. A validated
reference $(p,e,q)$ means that component $e$ may proceed only after prerequisite
component $p$ reaches processing phase $q$, such as emission or dispatch. Let
$D_\phi$ contain these validated references. The resulting response graph is
\begin{equation}
\label{eq:response-graph}
G_R=(V,D_\phi),
\end{equation}
where $V$ is the finite set of response components. In the running example,
$e_4$ requires $e_3$ to be dispatched.

Later checks must refer to the same admitted payload. HALO
canonicalizes each component and computes a fixed identifier,
called a \emph{digest}:
\begin{equation}
\label{eq:component-digest}
d_e=\operatorname{Digest}(\operatorname{Canonicalize}(e)).
\end{equation}
This prevents a modified payload from reusing earlier support or
authorization.

Local support alone is not enough. A component cannot remain when a declared
prerequisite has been removed. We call the repeated removal of such components
\emph{dependency closure}. Let
\[
\operatorname{Parents}(e)=\{p:\exists q,\,(p,e,q)\in D_\phi\}
\]
denote the declared prerequisite components of $e$. HALO retains the largest
set in which every component is currently supported and every declared
prerequisite also remains:
\begin{equation}
\label{eq:admission-fixed-point}
A_t^\star
=
\nu X.\,
\{e\in V:
L_{C_t}(e)\land
\operatorname{Parents}(e)\subseteq X\}.
\end{equation}
Here $\nu$ denotes the greatest fixed point. Operationally, HALO repeatedly
removes unsupported components and components whose prerequisites were removed
until no further change occurs. In the running example, stale flight action
$e_3$ and dependent status $e_4$ are removed, while notice $e_1$ and sensor
report $e_2$ remain.

\begin{lemma}[Correctness of dependency-closed retention]
\label{lem:closure}
For fixed catalog $K$ and current state $C_t$, after malformed and cyclic
references are rejected, $A_t^\star$ is the unique greatest locally supported
subset closed under all declared prerequisites.
\end{lemma}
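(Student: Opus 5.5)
The plan is a Knaster--Tarski argument on the finite powerset lattice $(2^V,\subseteq)$, followed by a check that the iterative removal procedure computes the same set. Define the operator
\[
F(X)=\{e\in V: L_{C_t}(e)\land \operatorname{Parents}(e)\subseteq X\}.
\]
For fixed $K$ and $C_t$, each predicate $L_{C_t}(e)$ is a fixed Boolean value, by Eqs.~\eqref{eq:footprint}--\eqref{eq:local-support}. After malformed references are rejected, $D_\phi$ and hence $\operatorname{Parents}(\cdot)$ are well defined. The first step is to show that $F$ is monotone: if $X\subseteq Y$, then $\operatorname{Parents}(e)\subseteq X$ implies $\operatorname{Parents}(e)\subseteq Y$, so $F(X)\subseteq F(Y)$. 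Since $2^V$ is a complete lattice, Knaster--Tarski gives a greatest fixed point
\[
\nu X.F(X)=\bigcup\{X: X\subseteq F(X)\}.
\]
This object is unique, because the greatest element of a poset is unique. That settles the well-definedness of $A_t^\star$ in Eq.~\eqref{eq:admission-fixed-point}.

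The second step is to identify post-fixed points with the sets described in the lemma. A set $X$ is ``locally supported and closed under declared prerequisites'' iff every $e\in X$ satisfies $L_{C_t}(e)$ and $\operatorname{Parents}(e)\subseteq X$. Unfolding the definition of $F$, this is exactly $X\subseteq F(X)$. The union of all such $X$ is again such a set:
\begin{itemize}
\item each member of the union is locally supported, since it is locally supported in some $X$;
\item its parents lie in that $X$, and hence in the union.
\end{itemize}
So the union is the greatest such set, and by Knaster--Tarski it equals $A_t^\star$. It is also a genuine fixed point, so nothing outside it qualifies either.

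The third step connects this to the operational description. Set $X_0=V$ and $X_{k+1}=F(X_k)$. By monotonicity and $X_1\subseteq V$, induction gives a decreasing chain. Since $V$ is finite, the chain stabilizes at some $X_n$ with $X_n=F(X_n)$ after at most $|V|$ rounds. Every post-fixed point $Y$ satisfies $Y\subseteq X_k$ for all $k$, again by induction: $Y\subseteq X_k$ gives $Y\subseteq F(Y)\subseteq F(X_k)=X_{k+1}$. Hence $X_n$ is the greatest fixed point. The removal procedure, which deletes unsupported components and then those whose parents are gone, computes exactly this chain. The order of deletions does not matter, since any sequence of sound single deletions stays above every post-fixed point and halts only at a post-fixed point.

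The main obstacle is the role of the hypothesis ``after malformed and cyclic references are rejected.'' The fixed-point argument itself needs only finiteness and well-defined parent sets. With cycles, the greatest fixed point would still exist but could retain a mutually dependent cycle whose phase requirements can never be met. I would therefore state that acyclicity is what makes $D_\phi$ a DAG, so closure has an operational meaning: parents precede children in phase order. I would also note that acyclicity yields an alternative least-fixed-point characterization by induction along a topological order, which coincides with $\nu$ here. The care needed is in stating precisely which property uses acyclicity, rather than in any calculation.
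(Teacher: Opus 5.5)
Your proposal is correct and takes essentially the same route as the paper's proof (Theorem~\ref{thm:greatest-admission} in the supplement): the closure operator is monotone on the finite lattice $2^V$, descending iteration stabilizes at a fixed point, and every locally supported dependency-closed set lies in every iterate, which gives greatestness and uniqueness. Your chain starts at $X_0=V$, which is only one step earlier than the paper's start at the locally supported set $F(V)$. Your remark that the fixed-point argument never uses acyclicity is also consistent with the paper's proof, which does not invoke it either.
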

This repeated-removal procedure reaches $A_t^\star$ in
$O(|V|+|D_\phi|)$ time after local support checks are evaluated. Proofs are in
the supplementary material.

\subsection{Dispatch Process}

Table~\ref{tab:halo-objects} separates retention from external use.
Binding $b_e$ identifies one action instance, while adapter $G$ sends it only
after the final dispatch gate succeeds.

\textbf{Admission retains; the final dispatch gate authorizes dispatch.}
A retained non-action, such as notice $e_1$, may be emitted after its
prerequisites are ready. A retained action remains only a candidate. Before it
is sent to the external system, HALO performs one last check. We call this
check the \emph{final dispatch gate}. It confirms that the same action instance
is still supported and authorized.

\paragraph{Processing-phase readiness.}
A component's \emph{processing phase} records its current progress, such as
retained, pending, emitted or dispatched, invalidated, or failed. Let
$\sigma_t(p)$ denote the phase of prerequisite component $p$, and let
$\succeq$ mean ``at or beyond.'' Component $e$ has ready prerequisites when
\begin{equation}
\label{eq:dependency-readiness}
\begin{aligned}
\operatorname{DepsReady}_t(e)
\iff{}&
\forall (p,q)\ \text{such that}\\
& (p,e,q)\in D_\phi,\quad
\sigma_t(p)\succeq q.
\end{aligned}
\end{equation}
A component remains pending until all required prerequisites reach their
specified phases. If a prerequisite fails permanently, HALO invalidates its
unemitted or undispatched dependents but leaves unrelated components unchanged.

\paragraph{Exact action binding.}
Conditions may change while an action waits, and a payload may be modified or
replayed. HALO must therefore recheck the same action instance that passed
admission. It records an \emph{admission witness} $W_e$, which identifies the
canonical payload and stable support data checked at admission, and an
\emph{action binding} $b_e$, which ties that witness to one dispatch attempt.
At admission time $t_a$, HALO creates
\begin{equation}
\label{eq:witness-binding}
\begin{aligned}
W_e &= \operatorname{CanonWitness}_K(d_e,C_{t_a}^{e}),\\
b_e &= H(W_e,\operatorname{seq}_e).
\end{aligned}
\end{equation}
$\operatorname{CanonWitness}_K$ selects stable fields, including the catalog
version, checked identities, and required certificate metadata. The associated
record also stores the admission generation and a \emph{one-dispatch token},
which authorizes at most one dispatch of that action. Values requiring a live
check remain in $C_t$ for the final dispatch gate. Together, these records prevent authority from being transferred
to a modified or replayed action.

\paragraph{Final dispatch gate.}
The final dispatch gate asks whether the exact retained action may cross the controlled
adapter boundary now. Immediately before dispatch, HALO evaluates
\begin{equation}
\label{eq:dispatch-ok}
\begin{aligned}
\operatorname{DispatchOK}_t(e)\iff{}&
e\in A_t^\star
\land \operatorname{DepsReady}_t(e)\\
&\land \operatorname{WitnessOK}_t(W_e)\\
&\land \operatorname{FreshAuth}_t(e,b_e).
\end{aligned}
\end{equation}
These conditions require current retention, ready prerequisites, unchanged
bound identities, and unused authorization for the same action binding.
Changes outside the support footprint remain irrelevant, while changes to
bound support data require readmission.

The final recheck, token consumption, and adapter call occur in one gate-owned
critical section. Its decision point $t_\ell$ is the instant when the gate
commits its result. If $\operatorname{DispatchOK}_{t_\ell}(e)$ holds, HALO
consumes the token and calls adapter $G$. A recoverable denial creates an
obligation; a terminal denial ends that action instance. This guarantee covers
the adapter call, not every downstream physical outcome.

\subsection{Localized Recovery}

Blocking an action is not sufficient; the runtime must also control how it may
return. As summarized in Table~\ref{tab:halo-objects}, HALO represents a
recoverable denial with a small recovery record called a \emph{localized
obligation} $o_e$. It records why the action was blocked, what support must be
refreshed, the permitted recovery route, and the \emph{replacement scope}
$S(o_e)$. The replacement scope identifies the blocked component and any
permitted fresh replacement.

An obligation performs no repair and grants no dispatch authority. Components
outside $S(o_e)$ keep their state. In the running example, recovery may replace
flight action $e_3$, but cannot authorize notice $e_1$ or sensor report $e_2$.

A catalog-defined \emph{recovery handler} may refresh support or produce a
revised action using updated evidence, a planner, or rule-based logic, but
cannot authorize dispatch. Any returned candidate must re-enter admission,
obtain a fresh binding and authorization, and pass the final dispatch gate.

\paragraph{Fresh readmission.}
A blocked action is never revived with its previous authority. Instead, a new
candidate undergoes \emph{fresh readmission} with current evidence, a later
generation, a new one-dispatch token, and an unused sequence.
Discharging $o_e$ only makes the candidate available; it must repeat
validation, support checking, dependency closure, exact binding, and the final
dispatch gate. Only components inside $S(o_e)$ may receive fresh instances;
ledger details and retry policies are in the supplementary material.

Algorithm~\ref{alg:halo-protocol} summarizes admission, processing-phase
readiness, final dispatch, and scoped fresh readmission. Relevant events
recompute only the affected dependency closure.

\begin{algorithm}[!t]
\caption{The HALO runtime protocol}
\label{alg:halo-protocol}
\tablefont
\begin{algorithmic}[1]
\REQUIRE Response $\widehat R$; catalog $K$; state providers; adapter $G$
\ENSURE Updated component states, emitted non-actions, dispatched actions,
and scoped obligations

\STATE \textbf{Admission:} Read current $C_t$; validate $\widehat R$ and its
prerequisite references; form $G_R$; compute $A_t^\star$; retain selected
components pending; create action bindings and one-dispatch tokens.

\WHILE{a support, expiry, path, or phase event affects $e$}
  \STATE Refresh $C_t$ and recompute the affected dependency closure.
  \IF{a required prerequisite of $e$ failed terminally}
    \STATE Invalidate $e$ and its unemitted or undispatched dependents.
  \ELSIF{$e\notin A_t^\star$}
    \STATE Withhold $e$ and dependents removed by closure.
    \STATE Create $o_e$ if the local denial is recoverable.
  \ELSIF{$\operatorname{DepsReady}_t(e)=0$}
    \STATE Keep $e$ pending.
  \ELSIF{$e$ is a non-action}
    \STATE Emit $e$ through its catalog-defined path; update $\sigma_t(e)$.
  \ELSE
    \STATE \textbf{Final dispatch gate:} Within the gate-owned critical section,
    refresh $C_t$, recompute closure, and evaluate
    $\operatorname{DispatchOK}_t(e)$.
    \IF{$\operatorname{DispatchOK}_t(e)=1$}
      \STATE Consume the token; invoke
      $G.\operatorname{dispatch}(e,b_e)$; update $\sigma_t(e)$.
    \ELSE
      \STATE Withhold $e$; create $o_e$ only if recoverable.
    \ENDIF
  \ENDIF
\ENDWHILE

\STATE \textbf{Recovery:} If $o_e$ remains live, readmit a candidate within
$S(o_e)$ with current support, a new one-dispatch token, a later generation, and an
unused sequence; otherwise terminate the obligation.
\end{algorithmic}
\end{algorithm}

\subsection{Protocol Properties}

These properties assume a fixed trusted catalog, faithful state providers, an
acyclic response graph, and enforced adapter and replacement-scope boundaries;
proofs are in the supplementary material.

\begin{corollary}[Preservation under irrelevant drift]
\label{cor:irrelevant-drift}
If local support remains unchanged for $e$ and every declared ancestor from
$t$ to $t'$, then
\begin{equation}
\label{eq:irrelevant-drift-preservation}
e\in A_t^\star
\;\Longrightarrow\;
e\in A_{t'}^\star.
\end{equation}
\end{corollary}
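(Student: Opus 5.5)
We argue from a characterization of membership in $A_t^\star$ that uses only local support along ancestor chains. By Lemma~\ref{lem:closure}, $A_t^\star$ is the unique greatest subset of $V$ that is locally supported and closed under declared prerequisites. The first step is to show that, in the acyclic response graph $G_R$,
\[
e\in A_t^\star \iff L_{C_t}(e)\ \text{holds and}\ L_{C_t}(a)\ \text{holds for every declared ancestor } a \text{ of } e,
\]
where the ancestors of $e$ are the components reachable from $e$ by repeatedly taking $\operatorname{Parents}$.

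For the forward direction, closure of $A_t^\star$ under $\operatorname{Parents}$ gives, by induction on path length, that every ancestor of a member is a member. Every member is locally supported by the defining equation~\eqref{eq:admission-fixed-point}, so $e$ and all its ancestors are supported.

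For the reverse direction, let $U$ be the set of components that are locally supported and all of whose ancestors are locally supported. $U$ is closed under $\operatorname{Parents}$, because an ancestor of a parent of $e$ is an ancestor of $e$, and the parent itself is an ancestor of $e$. Hence $U$ is a post-fixed point of the operator in~\eqref{eq:admission-fixed-point}. Since $\nu X$ is the greatest such set (Knaster--Tarski, or directly the maximality clause of Lemma~\ref{lem:closure}), $U\subseteq A_t^\star$.

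The corollary then follows at once. Suppose $e\in A_t^\star$. By the forward direction, $L_{C_t}(a)$ holds for every $a\in\{e\}\cup\operatorname{Anc}(e)$. The hypothesis states that $L_{C_{t'}}(a)=L_{C_t}(a)$ for exactly this set, so all these components remain locally supported at $t'$. Applying the reverse direction at $t'$ yields $e\in A_{t'}^\star$. This argument needs the declared graph $D_\phi$, and hence the ancestor set, to be the same at $t$ and $t'$. That holds because the response $\widehat R$ and the catalog $K$ are fixed and validation depends only on them. I will state this explicitly, since the graph is part of what ``declared ancestor'' refers to.

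The main obstacle is the reverse direction. I must make sure the greatest-fixed-point semantics in~\eqref{eq:admission-fixed-point} really admits every ancestor-supported component, and is not disturbed by components outside $e$'s ancestry. The post-fixed-point argument settles this, because membership of $e$ in $U$ never refers to non-ancestors. This locality is exactly what makes drift confined to other footprints irrelevant. Acyclicity is not strictly needed for the argument, but it keeps the ancestor set finite and well-founded, so the induction in the forward direction is on a well-defined path length.
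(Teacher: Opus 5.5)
Your proof is correct, but it handles the key step differently from the paper.

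The paper argues by induction from the roots of $e$'s (acyclic) ancestor closure down to $e$. Each ancestor keeps its local support at $t'$. Once its parents are known to lie in $A_{t'}^\star$, the fixed-point equation $A_{t'}^\star=F_{C_{t'}}(A_{t'}^\star)$ puts it back in.

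You instead first prove a closed-form characterization: $e\in A_t^\star$ iff $e$ and all its declared ancestors are locally supported. The forward direction comes from closure. The reverse direction is coinductive: you exhibit the ancestor-supported set $U$ as a post-fixed point and invoke greatestness.

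The paper's route is shorter and uses only the fixed-point property. It does, however, rely on well-founded induction over a DAG. In a graph with cycles it would never reach a fully supported cycle, which $\nu$ does contain; this is harmless here because cycles are rejected. Your route uses the ``greatest'' half of $\nu$, so it works without acyclicity. Your characterization is also reusable: it shows that $e$ leaves $A_{t'}^\star$ as soon as $e$ or any declared ancestor loses local support. That is the dependency-removal half of the same locality picture, which the paper's sketch does not produce.

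One small correction to your closing remark: acyclicity is not what makes your path-length induction well-defined. Each ancestor is witnessed by some finite path even when cycles exist, and the ancestor set is finite simply because $V$ is. So your argument genuinely does not use acyclicity at all.
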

Thus, unrelated route drift preserves $e_1$ and $e_2$ rather than causing
whole-response rejection.

\begin{proposition}[Gate-boundary integrity]
\label{prop:gate-integrity}
If adapter $G$ is called for action $e$ at decision point $t_\ell$, then
\begin{equation}
\label{eq:gate-boundary-integrity}
\begin{aligned}
&e\in A_{t_\ell}^\star,\qquad
\operatorname{DepsReady}_{t_\ell}(e),\\
&\operatorname{WitnessOK}_{t_\ell}(W_e),\qquad
\operatorname{FreshAuth}_{t_\ell}(e,b_e)
\end{aligned}
\end{equation}
hold simultaneously, and the token bound to $b_e$ authorizes at most one
adapter call.
\end{proposition}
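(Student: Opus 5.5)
The plan is to argue directly from the structure of Algorithm~\ref{alg:halo-protocol} and the trust model: adapter $G$ is enforced, so every adapter call originates from line ``invoke $G.\operatorname{dispatch}(e,b_e)$'' inside the gate-owned critical section. I would first establish this \emph{single-entry} fact: under the enforced adapter boundary, no other code path (recovery handler, emission path, or replayed payload) can invoke $G$, since recovery handlers explicitly cannot authorize dispatch and non-actions go only through their catalog-defined output paths. Hence any call to $G$ for $e$ is preceded, within the same critical section, by an evaluation of $\operatorname{DispatchOK}_t(e)$ that returned $1$.

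Second, I would pin down the time index. The decision point $t_\ell$ is defined as the instant the gate commits its result, and inside the critical section HALO refreshes $C_t$, recomputes closure, and evaluates $\operatorname{DispatchOK}$ at that instant. Because the section is gate-owned, no concurrent event can alter the ledger entry for $b_e$ or the phase $\sigma(e)$ between evaluation and the adapter call; state-provider drift occurring after $t_\ell$ is outside the claim, which is stated at $t_\ell$. Unfolding Eq.~\eqref{eq:dispatch-ok} at $t=t_\ell$ then yields the four conjuncts of Eq.~\eqref{eq:gate-boundary-integrity} simultaneously, with $e\in A_{t_\ell}^\star$ well defined and unique by Lemma~\ref{lem:closure} (acyclicity is assumed).

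Third, for the at-most-once part, I would use the authorization ledger as the invariant. $\operatorname{FreshAuth}_{t}(e,b_e)$ holds only if the ledger records the token bound to $b_e$ as unused; the critical section consumes that token atomically before calling $G$. Suppose two calls to $G$ used the same $b_e$. Both were preceded by successful gate evaluations; by mutual exclusion of the critical section they are ordered, so the second evaluation observed the ledger after the first consumption and $\operatorname{FreshAuth}$ failed, a contradiction. Modified or replayed payloads cannot sidestep this: by Eq.~\eqref{eq:component-digest} and Eq.~\eqref{eq:witness-binding}, a changed payload changes $d_e$ and hence $W_e$, failing $\operatorname{WitnessOK}$, while a fresh readmission receives a new sequence and therefore a distinct binding $b_e'\neq b_e$ (assuming $H$ is collision-resistant or injective on its inputs), so it never reuses the old token.

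The main obstacle is the atomicity step: making precise that evaluation, consumption, and the adapter call occur at a single logical instant $t_\ell$ without interleaving. I would handle it by modeling the critical section as a linearization point and the ledger as a trusted linearizable object, so that ``simultaneously'' means all four predicates are evaluated against one snapshot committed at $t_\ell$; the binding-collision assumption on $H$ is the other place where the argument must state an explicit hypothesis.
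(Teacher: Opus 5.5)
Your proposal is correct and follows essentially the same route as the paper's proof: the gate is the sole adapter entry, and unfolding $\operatorname{DispatchOK}_{t_\ell}(e)$ inside the gate-owned critical section yields the four conjuncts, while the consumed one-dispatch token gives the at-most-once clause. Your mutual-exclusion ordering argument spells out what the paper takes directly from the token's one-dispatch property, and your collision-resistance hypothesis on $H$ is not needed for this statement, since the paper rules out replay through the ledger and generation checks.
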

Thus, the final dispatch gate blocks stale or substituted action instances.

\begin{proposition}[Fresh scoped recovery]
\label{prop:scoped-recovery}
If an action derived from a blocked action reaches dispatch after discharge of
$o_e$, it has passed ordinary readmission with current support, a later
generation, a new one-dispatch token, and an unused sequence. Discharge creates
no recovered authority for any component outside $S(o_e)$.
\end{proposition}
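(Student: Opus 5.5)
We prove Proposition~\ref{prop:scoped-recovery} by induction over the protocol trace of Algorithm~\ref{alg:halo-protocol}, as a reachability invariant. We treat the protocol as a transition system whose state holds the component phases $\sigma_t$, the ledger of tokens and used sequences, the generation counter, and the live obligations. We first fix what ``derived from a blocked action'' means. It is any candidate produced by the recovery handler for $o_e$, or more generally any component whose lineage passes through the recovery step of the algorithm. This is the only transition that introduces new component instances after initial admission. The claim then splits into two parts: a \emph{freshness} part about every dispatched derived action, and a \emph{scope} part about components outside $S(o_e)$.

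\textbf{Freshness.} By Proposition~\ref{prop:gate-integrity}, any adapter call for a derived action $e'$ at decision point $t_\ell$ satisfies $e'\in A_{t_\ell}^\star$, $\operatorname{WitnessOK}_{t_\ell}(W_{e'})$, and $\operatorname{FreshAuth}_{t_\ell}(e',b_{e'})$. So it suffices to show three facts about where these objects come from.
\begin{enumerate}
\item The recovery step is the only place a derived candidate can obtain $W_{e'}$, $b_{e'}$, and a token. It obtains them by invoking the admission line, which validates the component, computes $d_{e'}$, checks $L_{C_t}$ against current support, runs closure via Lemma~\ref{lem:closure}, and issues a new token with a later generation and unused $\operatorname{seq}_{e'}$.
\item The recovery handler cannot write bindings or tokens, because it ``cannot authorize dispatch'' and the ledger is trusted.
\item The old binding $b_e$ cannot satisfy $\operatorname{FreshAuth}$ for $e'$. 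Either $d_{e'}\neq d_e$, so $W_{e'}\neq W_e$ by injectivity of the canonical digest, which we take as a collision-resistance assumption on $H$ and $\operatorname{Digest}$. Or the payloads coincide, and then the ledger marks $(b_e)$ as denied or consumed, with the generation and sequence monotone.
\end{enumerate}
A short lemma gives the monotonicity needed in (3): the ledger never reuses a sequence, and generations strictly increase on readmission.

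\textbf{Scope.} For a component $c\notin S(o_e)$, we show that discharge of $o_e$ does not change $c$'s phase, witness, binding, or token. The only writes the recovery step performs are to instances within $S(o_e)$, and this holds by the enforced replacement-scope boundary in the trust assumptions. Any later dispatch of $c$ therefore uses authority $c$ obtained independently through its own admission. Where $c$ is a dependent of the new candidate, its readiness through $\operatorname{DepsReady}$ comes from the new candidate's phase, not from recovered authority. To formalize ``no recovered authority,'' we define authority as the set of live (binding, token) pairs and show that the discharge transition adds none outside $S(o_e)$.

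\textbf{Main obstacle.} The hardest step is making ``derived from'' and ``recovered authority'' precise enough that the induction does not become circular. The argument must also rule out a modified candidate reusing the stale witness. I would handle this by attaching a lineage tag and generation to each instance in the ledger. Everything beyond that reduces to Proposition~\ref{prop:gate-integrity} plus case analysis over the algorithm's transitions.
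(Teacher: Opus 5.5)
Your two halves match the paper's proof, which is the conjunction of the no-revival theorem (Theorem~\ref{thm:no-prior-revival}) and recovery locality (Theorem~\ref{thm:recovery-locality}). Your scope half is essentially the latter: fresh candidates are accepted only inside $S(o_e)$, obligation and handler transitions issue no binding or token, and out-of-scope components advance only under existing bindings and ordinary $\operatorname{DepsReady}$ events.

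The gap is in the second branch of your step~(3), which is the crux of no-revival. You dispose of the old binding by asserting that ``the ledger marks $b_e$ as denied or consumed.'' The protocol does neither. The final gate denies \emph{before} token consumption, and the trusted ledger only records whether an authorization is fresh or used. A recoverably withheld action therefore still holds an unconsumed token. The dangerous case is exactly the one your digest argument cannot reach: same payload and same stable witness. For example, the denial came from a live-checked field such as AoI, and refreshed evidence later restores it without changing $\beta_e$. Then $\operatorname{WitnessOK}(W_e)$ passes and collision resistance is irrelevant. Your monotonicity lemma does not help either: ``sequences are never reissued and generations increase'' constrains what the ledger \emph{issues}, not what the gate \emph{rejects}.

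The missing idea is the revocation link the paper uses. Fresh admission increments the component-local generation, revokes the outstanding tokens of the previous generation, and rejects every prior sequence identity. $\operatorname{FreshAuth}$ at the gate accepts only the unused current-generation token of the current instance. Hence any recovered dispatch must carry some $b_r\neq b_e$ issued by readmission, and that is what yields ``later generation, new token, unused sequence.'' With that substitution, your facts (1)--(3) close the freshness half. The trace induction and lineage tags are then more machinery than needed, since the paper argues directly from these transition facts.
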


\subsection{UAV Instantiation}
\label{sec:domain-instantiations}

A domain supplies the catalog, state providers, recovery handler, and adapter
for one external interface. These details change, but HALO's core runtime
process remains the same.

The UAV catalog uses time-stamped evidence, operational rules, map or traffic
epochs, geometric bounds, approval, authorization, and vehicle state. For
source $s$, age of information (AoI) is the time since its latest valid update:
\begin{equation}
\label{eq:uav-aoi}
\Delta_s(t)=t-u_s(t),
\end{equation}
where $u_s(t)$ is that update time \cite{yates2021aoi}. Catalog $K$ assigns the
source-specific limits. POINT checks support at one instant, INTERVAL requires
support throughout a declared interval, and ROLLING rechecks support before
each new sequence. Together with the operational rules, these checks
instantiate $\operatorname{LocalOK}_K$ in Eq.~\eqref{eq:local-support}.

Selected Federal Aviation Administration (FAA) Part~107 provisions inform
catalog rules for pilot authority, airspace, operating limits, preflight
support, and traffic separation \cite{ecfr_part107}. The 410 PX4 profile
replays test agreement with these rules, not regulatory compliance. The final
dispatch gate rechecks epochs, bounds provenance, corridor support, exact
identity, and authorization. The adapter call is the formal boundary; PX4
setpoints are downstream observations, not guarantees of motion.

\begin{table*}[!t]
\centering
\tablefont
\setlength{\tabcolsep}{4pt}

\textbf{(a) Structured-response replay: 248 supported components; 72 epoch,
72 bounds, and 70 prerequisite opportunities}\par
\begin{tabular*}{\textwidth}{@{\extracolsep{\fill}}lrrrr@{}}
\toprule
Variant & Supported kept$\uparrow$ & Stale-epoch leak$\downarrow$ &
Wrong-bounds leak$\downarrow$ & Orphan leak$\downarrow$ \\
\midrule
AgentSpec-Global
& 0/248 & 0/72 & 0/72 & 0/70 \\
AgentSpec-Scoped\textsuperscript{\dag}
& 248/248 & 0/72 & 0/72 & 0/70 \\
IndependentFilter
& 248/248 & 0/72 & 0/72 & 70/70 \\
HALO-Admission
& 248/248 & 72/72 & 72/72 & 0/70 \\
\textbf{HALO-Full}
& \textbf{248/248} & \textbf{0/72} & \textbf{0/72} & \textbf{0/70} \\
\bottomrule
\end{tabular*}

\smallskip
\textbf{(b) PX4/Gazebo: 240 supported components; 50 stale, 100 prerequisite,
and 200 overblocking opportunities}\par
\begin{tabular*}{\textwidth}{@{\extracolsep{\fill}}lrrrr@{}}
\toprule
Variant & Supported kept$\uparrow$ & Stale setpoint$\downarrow$ &
Orphan leak$\downarrow$ & Supported blocked$\downarrow$ \\
\midrule
WholeResponse
& 40/240 & 0/50 & 0/100 & 200/200 \\
IndependentFilter
& 240/240 & 0/50 & 100/100 & 0/200 \\
HALO-Admission
& 240/240 & 50/50 & 0/100 & 0/200 \\
\textbf{HALO-Full}
& \textbf{240/240} & \textbf{0/50} & \textbf{0/100} & \textbf{0/200} \\
\bottomrule
\end{tabular*}

\caption{Mechanism comparison. WholeResponse overblocks,
IndependentFilter leaves orphaned dependents, and HALO-Admission misses
post-admission drift. AgentSpec-Scoped matches the admission-time decision but
lacks HALO's later binding, gate, and recovery process. HALO-Full avoids all
tested failures. \textsuperscript{\dag}\,Matched admission-time control.}
\label{tab:core-ablations}
\end{table*}

\section{Evaluation}

RQ1 tests mechanism necessity, RQ2 protocol conformance, RQ3 fresh scoped
recovery, and RQ4 runtime cost.

\subsection{Setup}

Paired variants share the response, catalog, graph, mutation, timing, and
adapter. WholeResponse removes partial retention, IndependentFilter removes
dependency closure, and HALO-Admission removes the final recheck. HALO-Full
keeps the complete runtime process. AgentSpec-Global applies one response-level
decision, while AgentSpec-Scoped shares HALO's component-scoped predicates and
current state. Separate controls test modification and replay.

Experiments use Python~3.10 on Windows Subsystem for Linux~2 (WSL2), PX4
software-in-the-loop (SITL) \cite{meier2015px4}, Gazebo
\cite{koenig2004gazebo}, live OpenClaw server-sent events (SSE), and a
Crazyflie~2.1 platform for the physical UAV sequence.
Cold-start sessions use a new PX4 process; AgentSpec steady-state repetitions
do not. The replay, PX4-profile, controlled-timing, and AgentSpec panels are
separate cohorts whose denominators are not pooled. We report case-level
outcomes separately from PX4 setpoints.

\subsection{RQ1: Mechanism Necessity}

Table~\ref{tab:core-ablations} should be read across each reduced variant: its
failure column identifies the missing runtime mechanism. WholeResponse
overblocks, IndependentFilter leaves dependents without required parents, and
HALO-Admission misses post-admission drift; HALO-Full avoids all three in the
evaluated cases. Across 300 calls in 15 families, 226 replay-eligible responses
produced 7,910 paired evaluations, while separate controls rejected modified
instances and reused authorization. The PX4/Gazebo panel evaluates 12
predefined conditions under four protocol variants across ten cold-start SITL
sessions, yielding 480 condition--variant executions. RQ3 additionally presents
one controlled UAV operation. Together, these results show that retention,
dependency closure, exact binding, and final-gate revalidation address distinct
failures.

\paragraph{Comparison with AgentSpec.}
An action \emph{fingerprint} is a log identifier used to match an action with
an internal PX4 setpoint. With matched inputs and current state,
AgentSpec-Scoped reproduced HALO's admission-time decision. Without HALO's
binding and final dispatch gate, it produced fingerprints for stale actions in
all tested post-decision cases and accepted all replayed tokens. Adding HALO's
runtime process removed both failures and completed all fresh recovery cases.
A separate cold-start confirmation showed the same pattern; detailed
denominators are provided in the supplementary material. This isolates HALO's contribution as the
post-decision dispatch and recovery process rather than a richer rule language.

\subsection{RQ2: Runtime Protocol Conformance}

We ask whether the implementation matches the formal protocol. It matched all
expected outcomes across 96 admission cases, 135 exhaustively enumerated graphs
of up to 16 components, 45 independently checked graphs of 32 components, 20
protocol cases, and 10,000 valid or fault-injected schedules. These cases
exercise local support, closure, event ordering, final rechecks, one-dispatch
token enforcement, and recovery transitions, providing protocol-conformance
rather than open-world success evidence. Agreement across exhaustive small
graphs and independently checked larger graphs indicates that conformance is
not tied to one graph size or test construction.

\begin{figure*}[!t]
\centering
\includegraphics[width=\textwidth]
{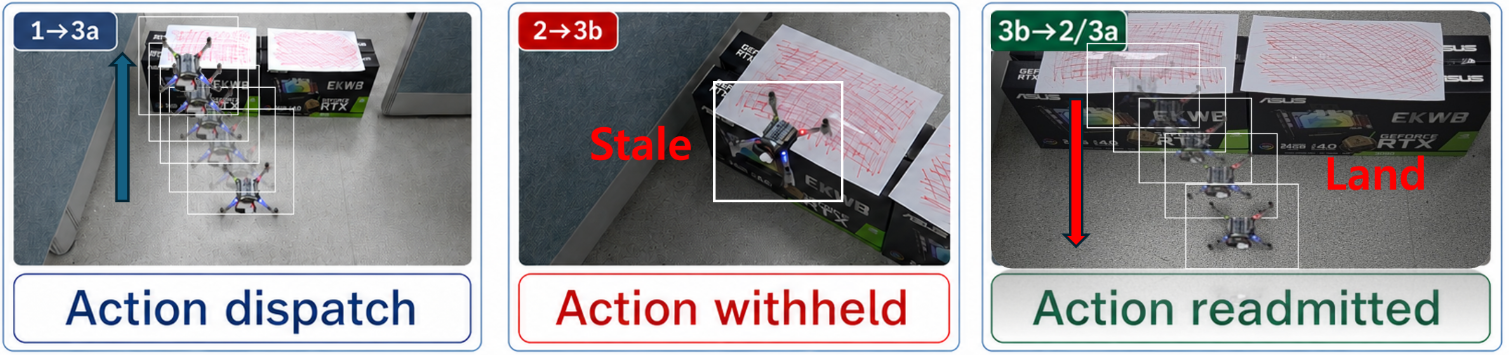}
\caption{Physical Crazyflie realization of Figure~\ref{fig:halo-overview}.
HALO first retains the forward action, then withholds it after support changes
while preserving the independent range report, and finally readmits a fresh
recovery candidate.}
\label{fig:uav-operation}
\end{figure*}

\subsection{RQ3: Fresh Scoped Recovery}

The PX4 profiles combine UAV-specific catalog conditions---source-specific
freshness, POINT/INTERVAL/ROLLING semantics, operational rules, authorization,
epochs, and geometric support---with HALO's exact-identity and unused-sequence
checks. Recovery requires current evidence, a later generation, a new
one-dispatch token, and an unused sequence; terminal or out-of-scope cases
create no authority. Across 48/48 recovery-oracle matches, all 36 executable
cases returned through admission, while 12 terminal cases ended without
authority. Controls rejected five attempts to reuse prior authorization and
one out-of-scope recovery.

\paragraph{Physical operation.}
Figure~\ref{fig:uav-operation} instantiates the three transitions in
Figure~\ref{fig:halo-overview}: the initial forward action is retained after
admission ($1\!\rightarrow\!3a$), the unsupported forward action is withheld
($2\!\rightarrow\!3b$) while \texttt{REPORT(z\_range)} continues to be
emitted, and a fresh backtrack-and-land candidate re-enters admission,
receives a new binding and authorization, and passes the final dispatch gate
($3b\!\rightarrow\!2/3a$). Table~\ref{tab:domain-summary} quantifies this Crazyflie
sequence and summarizes the broader PX4 and recovery checks.

\begin{table}[!t]
\centering
\footnotesize
\setlength{\tabcolsep}{4pt}
\renewcommand{\arraystretch}{1.10}
\begin{tabular}{@{}
P{0.30\columnwidth}
P{0.64\columnwidth}@{}}
\toprule
Test & Observed result \\
\midrule

Crazyflie operation
& 10/10 stale actions withheld; 10/10 range reports emitted;
0/10 matching stale setpoints; 10/10 fresh readmissions. \\

PX4 dispatch checks
& 410/410 decisions matched; 0/250 denied actions reached PX4;
60/60 recoveries were fresh. \\

Recovery checks
& 48/48 outcomes matched; 12 terminal; 36 returned through admission. \\

\bottomrule
\end{tabular}
\caption{Repeated Crazyflie, PX4, and recovery outcomes;
the first row corresponds to Figure~\ref{fig:uav-operation}.}
\label{tab:domain-summary}
\end{table}

\subsection{RQ4: Runtime Cost}

We finally ask how the core checking cost grows with response size after
domain-specific predicates have been evaluated. Admission takes
\begin{equation}
\label{eq:admission-complexity}
O(|V|+|D_\phi|),
\end{equation}
and the final dispatch gate is linear in action $e$'s declared prerequisites
and footprint size $|\pi_e(C_t)|$. With up to 32 components, the largest median
admission cost was 0.333\,ms and the 95th percentile was 0.453\,ms. With up to
16 direct dependencies and 32 footprint fields, the largest median
pre-consumption gate cost was 0.035\,ms and the 95th percentile was
0.061\,ms. Median-based regressions yielded $R^2=0.9978$ for admission and
$R^2=0.9975$ for final-gate compute, consistent with but not proving the
source-level bounds. Measurements exclude model generation, domain predicates,
token mutation, and downstream execution; the tested core checks are therefore
small in this setting without establishing general scalability.

\section{Discussion}

HALO separates two decisions often conflated: whether a component remains
eligible and whether an exact action may cross the controlled interface.
Partial preservation requires dependency-consistent retention, while admission
alone cannot protect a waiting action as runtime support changes. Localized
obligations identify what must be refreshed without reviving the blocked
instance or expanding its authority.

HALO's guarantees are boundary-scoped. Under a trusted catalog, faithful state
providers, an enforced authorization ledger, and a gate-owned adapter, HALO provides dependency coherence, exact-instance binding, one-dispatch token
enforcement, and fresh scoped readmission. Semantic correctness, undeclared
dependencies, provider failures, distributed atomicity, and downstream
physical outcomes remain outside this boundary. PX4 setpoints record governed
interface crossings; the Crazyflie motion remains a downstream observation.

The contribution is not a new predicate language, freshness metric, or
authorization primitive, but a reusable typed-component runtime protocol above existing runtime-policy and domain-assurance mechanisms. The
same process can apply when a domain supplies support rules, current state
providers, and an enforced effect boundary. A supplementary cross-domain check
shows that HALO can be applied at a versioned software boundary without losing
the tested preservation, blocking, or fresh recovery behavior.

\section{Conclusion}

Components from one agent response do not share one validity state. HALO
preserves the largest supported set whose declared prerequisites also remain
supported, then separately determines whether each exact action has current
dispatch authority. A localized obligation records required updates and replacement scope, but
only a fresh candidate may return.

Across structured replay, protocol-conformance tests, and PX4/Gazebo
experiments, HALO-Full preserved unaffected components, removed unsupported
dependents, excluded all tested stale dispatches, and completed fresh scoped
recovery. These results establish heterogeneous component admission as a
distinct runtime layer between structured responses and controlled external
actions. As LLMs evolve into agents that act on external systems, HALO provides a
component-level runtime paradigm for governing effect-bearing responses.

\appendix

\maketitle
\suppressfloats[t]

\section{Scope of This Supplement}
This supplement expands the formal arguments, evaluation cohorts, measurement
boundaries, and reproduction conditions behind the four research questions in
the main paper: mechanism necessity, runtime-protocol conformance, fresh scoped
recovery, and runtime cost. It uses the main paper's notation:
$\widehat R$ is the agent-returned bundle, $G_R=(V,D_\phi)$ is the
catalog-validated response graph, $K$ is the trusted domain catalog, and
$C_t$ is the time-indexed runtime support state. The primary evidence remains
the structured-response and UAV/PX4 evaluation reported in the main paper.
A final, explicitly supplementary software-boundary check evaluates whether
the same runtime process can be instantiated with a different catalog and
adapter; it is not an additional main-paper research question.

\section{Formal Model and Protocol}

\subsection{Core and Domain Interface}
HALO defines a fixed component-level protocol over a domain interface. Its core
governs dependency closure, processing state, exact dispatch authority, and
scoped recovery; domains supply component types,
footprints, predicates, recovery routes, providers, and governed adapters.
Table~\ref{tab:domain-interface} first records the main-paper UAV
instantiation and then the supplementary repository instantiation. The latter
tests interface transfer at a sandboxed versioned-software boundary; it does
not claim a new merge algorithm or replace Git's native consistency
mechanisms.

\begin{table*}[t]
\centering
\footnotesize
\setlength{\tabcolsep}{3pt}
\renewcommand{\arraystretch}{1.04}
\begin{tabular*}{\textwidth}{
@{\extracolsep{\fill}}
P{0.15\textwidth}
P{0.25\textwidth}
P{0.27\textwidth}
P{0.27\textwidth}@{}}
\toprule
Interface element & Fixed HALO role & UAV instantiation
& Supplementary repository instantiation \\
\midrule
Component types
& Units of admission and processing state
& notice, refresh request, route action, status
& review notice, evidence request, merge action, completion notice \\
Support footprint
& Runtime-state fields relevant to one component
& traffic age, map epoch, vehicle state, bounds
& target head, patch digest, CI, approval \\
Support predicates
& Inputs to $\operatorname{LocalOK}$
& freshness, authority, geometry, operating rules
& version, protection, CI, approval \\
Dependency phases
& Parent milestones required before progress
& route dispatch or observed completion
& merge dispatch before completion notice \\
Recovery route
& Source of a fresh candidate
& evidence refresh, approval, replan
& updated CI, approval, rebase \\
Predicate provider
& Supplies the current runtime support state
& telemetry and runtime services
& repository and review state \\
Governed adapter
& Gate-owned effect boundary
& PX4 setpoint adapter call
& merge-commit creation call \\
Observation hook
& Downstream empirical evidence
& PX4 internal setpoint
& Git ancestry audit \\
\bottomrule
\end{tabular*}
\caption{The fixed HALO interface under the main-paper UAV instantiation and
the supplementary repository transfer check. The two instantiations change
the catalog, providers, recovery route, and governed adapter while preserving
the same HALO protocol core.}
\label{tab:domain-interface}
\end{table*}

\subsection{Component-Level Admission}
The trusted catalog, rather than the returned bundle, determines each type's
schema, canonicalization, mandatory footprint, support predicates, allowed
dependency phases, certificate requirements, and recovery route. The response
supplies only typed payloads and component references. This is the assumption
under which the following oracle comparison is interpreted.

\paragraph{Typed binding model.}
The agent returns the typed bundle $\widehat R$; HALO validates its component
types and dependency references against catalog $K$ and constructs the finite
phase-labeled graph
\begin{equation}
\label{eq:supp-response-graph}
G_R=(V,D_\phi),
\qquad
D_\phi\subseteq V\times V\times\mathcal{Q},
\end{equation}
where \(V=\{e_1,\ldots,e_n\}\) is the set of typed components and
\((p,e,q)\in D_\phi\) states that dependent \(e\) requires parent \(p\) to
reach phase \(q\in\mathcal{Q}\).  We write
\begin{equation}
\label{eq:supp-dependency-sets}
\begin{aligned}
D_\phi(e)&=\{(p,q):(p,e,q)\in D_\phi\},\\
D(e)&=\{p:\exists q,\,(p,q)\in D_\phi(e)\}.
\end{aligned}
\end{equation}
for the phase-labeled and positive prerequisite dependencies of \(e\),
respectively.
Duplicate identifiers, missing references, and cycles fail closed before a
dispatch authorization is issued.

For each component, the runtime first computes the canonical digest
\begin{equation}
\label{eq:supp-component-digest}
d_e=\operatorname{Digest}(\operatorname{Canonicalize}(e)).
\end{equation}
The catalog-defined footprint selects the current state relevant to $e$:
\begin{equation}
\label{eq:supp-footprint}
C_t^e=\pi_e(C_t).
\end{equation}
The stable subprojection
\(\beta_e(C_t)\) contains source identities, epochs or versions, bounds and
provenance identifiers, and approval or certificate identity. Admission stores
\begin{equation}
\label{eq:supp-stable-witness}
\begin{aligned}
W_e
&=\operatorname{CanonWitness}_K\bigl(
  d_e,\beta_e(C_{t_a}),\\
&\hspace{30mm}\operatorname{certificate}_e,v_K\bigr).
\end{aligned}
\end{equation}
The exact action binding used by the main paper is
\begin{equation}
\label{eq:supp-binding}
b_e=H(W_e,\operatorname{seq}_e).
\end{equation}
The associated admission record stores the component type, footprint,
dependency references, admission generation, and a \emph{one-dispatch token}
bound to this exact action instance. The token authorizes at most one dispatch. These fields support validation and ledger checks;
the dispatch identity itself is $b_e$.
Clock time, age, and changing telemetry are not hashed as immutable witness
bytes; catalog predicates reevaluate them from the live projection at the
final gate. HALO distinguishes
generation \(t_g\), admission \(t_a\), final authorization \(t_c\), first
enqueue \(t_p\), and optional downstream observation \(t_x\).  The binding at
dispatch checks the witness against current support rather than hashing the
proposal-wide mutable state.

\paragraph{Temporal support semantics.}
The following equations restate the main-paper UAV definitions so that the
temporal conventions are self-contained.
For source \(s\), let \(u_s(t)\) be the generation time of its newest available
update. Its Age of Information is \cite{yates2021aoi}
\begin{equation}
\label{eq:supp-aoi}
\Delta_s(t)=t-u_s(t).
\end{equation}
For component \(e\), let \(\mathcal{S}_e\) be its relevant sources and
\(H_{e,s}\) their catalog thresholds. The represented freshness predicate is
\begin{equation}
\label{eq:supp-fresh-ok}
\operatorname{FreshOK}(e,t)\iff
\bigwedge_{s\in\mathcal{S}_e}\Delta_s(t)\leq H_{e,s}.
\end{equation}
Equality is admitted: \(\Delta_s(t)=H_{e,s}\) is valid. AoI is distinct from
source epoch, bounds provenance, certificate validity, information
correctness, and operational-rule truth.

A separate \(\operatorname{RuleOK}\) predicate checks the represented
operational condition. The UAV adapter combines the two dimensions:
\begin{equation}
\label{eq:supp-support-ok}
\begin{aligned}
\operatorname{SupportOK}(e,t)
&\iff \operatorname{FreshOK}(e,t)\\
&\quad{}\land\operatorname{RuleOK}(e,t).
\end{aligned}
\end{equation}
Freshness therefore cannot substitute for rule truth, and rule truth cannot
substitute for current evidence.

The temporal certificate determines the time region that support must cover.
For a POINT component, freshness is checked at its first downstream effect:
\begin{equation}
\label{eq:supp-point}
\operatorname{POINT\_OK}(e)\iff
\operatorname{FreshOK}(e,t_{\mathrm{first}}).
\end{equation}
For an INTERVAL component, support is certified to cover the declared interval
by the conservative condition
\begin{equation}
\label{eq:supp-interval}
t_{\mathrm{end}}+\delta_g<t_{\mathrm{valid\_until}}(e),
\end{equation}
where \(\delta_g\) is the adapter guard. Equality fails closed. This is an
interval-coverage check over the certificate horizon, rather than continuous
live monitoring during execution. For a ROLLING component, every tested window
requires refreshed support and a new admission before the next action-bearing
sequence. The following profiles are informed by selected FAA Part~107
provisions \cite{ecfr_part107}.

\begin{table*}[t]
\centering
\small
\setlength{\tabcolsep}{4pt}
\begin{tabular*}{\textwidth}{@{\extracolsep{\fill}}
lP{0.11\textwidth}P{0.44\textwidth}P{0.28\textwidth}@{}}
\toprule
Profile & Anchor & Represented machine-checkable support & Fresh recovery \\
\midrule
AUTH-1 & 14 CFR 107.19
& Remote-PIC designation, current authority token, and override or abort
availability
& Fresh authority evidence and new admission \\
AUTH-2 & 14 CFR 107.41
& Airspace authorization not required, or represented authorization present
and current
& Fresh authorization evidence and new admission \\
AUTH-3 & 14 CFR 107.51
& Operation inside represented time, geometry, altitude, and configured
operating-limit scope
& Corrected scope and new admission \\
OPS-1 & 14 CFR 107.49
& Current environment assessment, healthy command link, sufficient energy, and
current preflight evidence
& Refreshed preflight support and new admission \\
SEP-1 & 14 CFR 107.37
& Current traffic epoch and age, available bounded tube, and represented
separation predicate
& Fresh traffic, a new tube, and new admission \\
\bottomrule
\end{tabular*}
\caption{FAA-anchored engineering profiles used by the benchmark. Each profile
maps a selected operational provision to machine-checkable support conditions
and a fresh recovery route.}
\label{tab:supp-faa-profiles}
\end{table*}

\paragraph{Dependency-closed admission.}

For fixed catalog $K$ and runtime support state $C_t$, let
\[
L_{C_t}(e)=\operatorname{LocalOK}_K(e,\pi_e(C_t))
\]
denote component-local support, excluding dependency checks, and let $D(e)$
denote declared parents. The corresponding closure operator is
\begin{equation}
\label{eq:supp-closure-operator}
F_{C_t}(X)=\{e\in V: L_{C_t}(e)\ \land\ D(e)\subseteq X\}.
\end{equation}
At dispatch, the declared relation is phase-labeled: a dependent component
must wait for the required parent phase, not merely for parent admission.
The final gate also checks the exact canonical component digest, current
component support projection, certificate, horizon, and unused one-dispatch token.
Under unique component identifiers, a finite acyclic declared graph, a fixed
runtime support state, independently valid unused component-bound tokens, and no
cross-component resource-allocation conflict, the greatest fixed point of
$F_{C_t}$ is the greatest dependency-closed locally supported subset. The
implementation rejects duplicate identifiers, missing references, and cycles
before an effect can be emitted. The broader graph-oracle benchmark below tests
this construction independently across multiple graph sizes.

\paragraph{Matched admission-time control.}
A customized policy using the same component predicates and declared
prerequisites can reproduce HALO's initial retained set. We therefore use this
configuration only as a matched admission-time control. It does not provide
HALO's exact-instance binding, final dispatch recheck, component-bound
one-dispatch token enforcement, dependency-aware processing, or scoped fresh
readmission.

The expanded dependency-DAG oracle benchmark contains 180 deterministic cases.
For 135 graphs with 4, 8, or 16 components, exhaustive subset enumeration is
the oracle. For 45 graphs with 32 components, a separately implemented
descending-rejection oracle is used. The oracle code does not import HALO's
admission procedure or share its selection helpers, uses a different
rejection-propagation traversal, and is driven by the frozen graph artifact
rather than HALO's traversal order. Its
maximality check starts from locally invalid, catalog-invalid,
missing-reference, and cyclic nodes, then rejects every descendant whose parent
has been rejected. The harness checks local soundness, dependency closure, and
greatest-set maximality separately from exact-set agreement. The graph-generation
seed is recorded independently; the oracle execution is deterministic and
performs no randomized sampling. The tested grid has zero local-validity,
dependency-closure, and greatest-set mismatches. Once local predicate results are available,
dependency closure requires \(O(|V|+|D_\phi|)\) time. Including predicate
evaluation, total admission cost is
\(O(|V|+|D_\phi|+\operatorname{PredicateCost})\).

\subsection{Dispatch and Recovery Details}

\paragraph{Component processing state.}
HALO separates the current runtime support state $C_t$, the exact action
binding $b_e$, and processing state $\sigma_t(e)$. State $C_t$ contains mutable
world and system support. The admission record contains the canonical digest,
component footprint, support witness, certificate metadata, catalog version,
token, sequence, and admission generation; $b_e$ identifies the exact action
instance and dispatch attempt. The gate instance fixes the governed adapter.
The processing state records whether that exact
component instance has progressed far enough to satisfy a phase-labeled dependency.

The represented component states are
\begin{equation}
\label{eq:supp-component-states}
\begin{aligned}
\sigma_t(e)\in\{&
\mathsf{PROPOSED},\mathsf{RETAINED},\mathsf{PENDING},\\
&\mathsf{EMITTED},\mathsf{DISPATCHED},\\
&\mathsf{OBSERVED\_SUCCESS},\mathsf{WITHHELD},\\
&\mathsf{FAILED},\mathsf{INVALIDATED}\}.
\end{aligned}
\end{equation}
$\mathsf{WITHHELD}$ denotes a component prevented from reaching its
catalog-defined output path. $\mathsf{INVALIDATED}$ denotes an undispatched
retained instance that cannot
proceed under its current binding. $\mathsf{FAILED}$ denotes an instance that passed
the final gate but failed to reach a required downstream milestone. A dependent
is ready only when every parent reaches its catalog-required phase:
\begin{equation}
\label{eq:supp-dependency-readiness}
\begin{aligned}
\operatorname{DepsReady}(e,\sigma_t)
&\iff \forall(p,q)\in D_\phi(e):\\
&\hspace{22mm}\sigma_t(p)\succeq q.
\end{aligned}
\end{equation}
A terminal parent failure invalidates unemitted or undispatched descendants. It does not roll
back already emitted or dispatched components or unrelated lineages.
A declared parent also remains a current support prerequisite until its
dependent proceeds; the phase label adds the processing milestone checked by
\(\operatorname{DepsReady}\).

Obligations use a separate recovery state
\begin{equation}
\label{eq:supp-obligation-states}
\begin{aligned}
\mathsf{CREATED}&\rightarrow\mathsf{DISPATCHED}
\\[-1mm]
&\rightarrow\mathsf{EXTERNAL\_SUPPORT\_RECEIVED}\\[-1mm]
&\rightarrow\mathsf{READY\_FOR\_READMISSION}\\[-1mm]
&\rightarrow\mathsf{DISCHARGED},
\end{aligned}
\end{equation}
with explicit failure and timeout exits. Readiness is evidence that the routed
support arrived, not dispatch authorization:
\begin{equation}
\label{eq:supp-readiness-not-dispatch}
\begin{aligned}
\omega_t(o_e)
&=\mathsf{READY\_FOR\_READMISSION}\\
&\not\Rightarrow\operatorname{Dispatch}(e).
\end{aligned}
\end{equation}

\paragraph{Executable admission procedure.}
The following numbered procedure is the supplement counterpart of the compact
method equations in the main paper.
\begin{enumerate}
\item Validate the typed envelope, catalog version, identifiers, references,
and dependency phases. Reject duplicate identifiers, unknown types, missing
references, cycles, and catalog mismatches.
\item Canonicalize each component and compute its exact digest. Check that any
support certificate is bound to that digest.
\item Read one runtime support state and evaluate the catalog predicates over each
component's required footprint.
\item Initialize the candidate set with locally supported components. Repeatedly
remove a component whose declared prerequisite parent is absent; the remaining
set is $A_t^\star$.
\item For each retained action-bearing component, create a binding over its
canonical bytes, component-relevant support witness, certificate, catalog
version, sequence, and admission generation. Issue a component-bound
one-dispatch token.
\item On a support or parent-phase event, reevaluate the affected component and
its dependent closure. An unmet nonterminal phase leaves the component pending;
a terminal failure or currently unsupported ancestor invalidates unemitted or undispatched
descendants.
\item At the sole governed-adapter entry, read current support, check membership in
the current dependency closure, validate the stable witness, dependency phases,
horizon, and unused current-generation token, then atomically consume the token
and invoke the adapter.
\item On a recoverable denial, create the catalog-routed obligation. External
support may create a fresh candidate, but that candidate returns to Step~1 with
a later generation, current evidence, a new one-dispatch token, and an unused
sequence. A
terminal denial produces no dispatch authorization.
\end{enumerate}

\paragraph{Proof details.}
\begin{theorem}[Greatest dependency-closed admission]
\label{thm:greatest-admission}
For a finite acyclic represented graph with fixed catalog $K$ and runtime
support state $C_t$,
$A_t^\star=\nu X.F_{C_t}(X)$ is the unique greatest locally supported set closed
under declared positive dependencies.
\end{theorem}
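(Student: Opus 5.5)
The plan is to run the standard Knaster--Tarski argument on the finite powerset lattice $(2^V,\subseteq)$, and then use acyclicity to get an explicit description of the fixed point. First, $F_{C_t}$ is monotone. If $X\subseteq Y$ and $e\in F_{C_t}(X)$, then $L_{C_t}(e)$ holds and $D(e)\subseteq X\subseteq Y$, so $e\in F_{C_t}(Y)$. By Knaster--Tarski, $\nu X.F_{C_t}(X)$ exists and is unique, and it equals the union of all post-fixed points:
\[
A_t^\star=\bigcup\{X\subseteq V: X\subseteq F_{C_t}(X)\}.
\]

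Next, the post-fixed points of $F_{C_t}$ are exactly the locally supported, dependency-closed subsets. The condition $X\subseteq F_{C_t}(X)$ says that every $e\in X$ satisfies $L_{C_t}(e)$ and $D(e)\subseteq X$. That is precisely local support plus closure under declared positive dependencies. The union of any family of such sets is again such a set: each member of the union is locally supported, and its parents lie in the same member set, hence in the union. So this family has a greatest element, namely its union, which is $A_t^\star$. Since a fixed point is in particular a post-fixed point, $A_t^\star$ itself is locally supported and closed. Uniqueness of the greatest element is immediate from antisymmetry of $\subseteq$.

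To connect with the operational repeated-removal procedure (and the $O(|V|+|D_\phi|)$ claim), I would iterate downward, $X_0=V$ and $X_{k+1}=F_{C_t}(X_k)$. Monotonicity gives a decreasing chain, and finiteness makes it stabilize within $|V|$ steps at the greatest fixed point. Acyclicity then yields a closed form. Define the set $B$ of components having some ancestor-or-self $a$ (along declared edges) with $\neg L_{C_t}(a)$. By induction on a topological order, I would show $V\setminus B$ is closed and supported, and that every element of $B$ is excluded from any closed supported set. This second half is an induction on the length of the path from the unsupported ancestor: the ancestor itself is unsupported, and each step down the path loses a parent. Hence $A_t^\star=V\setminus B$, which is exactly what a single reverse-propagation traversal computes in linear time. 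This matches the oracle description in the supplement.

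The main obstacle is not the lattice argument, which holds even without acyclicity, but making precise what the hypotheses buy. Well-formedness (unique identifiers, no missing references) is needed so that $D(e)\subseteq V$ and $F_{C_t}$ is a well-defined operator on $2^V$. Acyclicity is needed for the closed-form characterization and the linear-time propagation. On a cycle, the greatest fixed point would retain a fully supported cycle, which is mathematically fine but contradicts the "ancestor-rooted" reading. So I would state explicitly that the greatest-fixed-point claim needs only finiteness and monotonicity, while acyclicity is used only for the descendant-rejection description.
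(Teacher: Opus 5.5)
Your proof is correct, but its main argument takes a different route from the paper's. The paper iterates downward from the locally supported set $F_{C_t}(V)$ and uses finiteness of $2^V$ to get stabilization at a fixed point. It then argues that every locally supported dependency-closed $Y$ lies in the initial set and, by monotonicity, in every iterate, hence in the limit.

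You instead use the Tarski characterization of $\nu X.F_{C_t}(X)$ as the union of post-fixed points. You identify the post-fixed points exactly with the locally supported dependency-closed sets, and observe that this family is closed under union. This makes explicit two facts the paper uses silently. First, the limit is itself locally supported and closed, because it is a fixed point and hence post-fixed. Second, the monotonicity step needs $Y\subseteq F_{C_t}(Y)$, which is precisely local support plus closure.

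The paper's route, in exchange, is literally the repeated-removal procedure. Your own iteration from $V$ is the same chain shifted by one step. The paper's containment-in-every-iterate argument is exactly what justifies your remark that this chain stops at the greatest fixed point. Your closed form $A_t^\star=V\setminus B$ goes beyond the paper's proof, which gives no explicit description and leaves descendant rejection to the oracle benchmark.

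One correction: acyclicity is needed neither for that closed form nor for linear-time propagation.
\begin{itemize}
\item $V\setminus B$ is closed simply because a parent's ancestors are ancestors of the child, so no topological induction is required.
\item Exclusion of $B$ is induction on the length of a finite path, which works in any graph.
\item A fully supported cycle has no unsupported ancestor-or-self, so it lies in $V\setminus B$ exactly as it lies in $\nu X.F_{C_t}(X)$.
\item Propagating from the unsupported nodes to their descendants computes $B$ in $O(|V|+|D_\phi|)$ time with or without cycles.
\end{itemize}
Cycles matter only because the implementation and oracle reject cyclic nodes fail-closed, which would differ from $\nu X.F_{C_t}(X)$ on a supported cycle. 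The acyclicity hypothesis aligns the theorem with that preprocessing rather than enabling any step of the proof.
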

\begin{proof}
The operator is monotone on the finite lattice $2^V$ and maps every subset to
a subset of the locally supported set. Descending iteration from the locally
supported set therefore stabilizes at a fixed point. Any other locally
supported dependency-closed set is contained in the initial set and, by
monotonicity, in every subsequent iterate. It is therefore contained in the
limit $A_t^\star$, establishing greatestness and uniqueness.
\end{proof}

\begin{theorem}[Preservation under irrelevant drift]
\label{thm:irrelevant-drift}
Suppose a mutation from $C_t$ to $C_{t'}$ leaves the support projection and
local support of $e$ and every declared ancestor unchanged. If
$e\in A_t^\star$, then $e\in A_{t'}^\star$.
\end{theorem}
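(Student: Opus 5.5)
Let $U$ denote the set consisting of $e$ together with all of its declared ancestors in the acyclic graph $G_R$. The set $U$ is upward closed: every parent of a member of $U$ is again in $U$. The plan is to show that $U\subseteq A_t^\star$, and then to exhibit $U$ as a locally supported, dependency-closed set under $C_{t'}$. Theorem~\ref{thm:greatest-admission} then places $U$ inside $A_{t'}^\star$, and in particular gives $e\in A_{t'}^\star$.

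\textbf{Step 1 ($U\subseteq A_t^\star$).} By Theorem~\ref{thm:greatest-admission}, $A_t^\star$ is a fixed point of $F_{C_t}$. Hence every member $x\in A_t^\star$ satisfies $L_{C_t}(x)$ and $D(x)\subseteq A_t^\star$. Starting from $e\in A_t^\star$, induction on the length of an ancestor path shows that every declared ancestor of $e$ lies in $A_t^\star$. Acyclicity and finiteness make this induction well-founded, so $U\subseteq A_t^\star$. It follows that $L_{C_t}(x)$ holds for every $x\in U$.

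\textbf{Step 2 (transfer to $C_{t'}$).} The hypothesis says the projection $\pi_x$ and the local support of each $x\in U$ are unchanged from $t$ to $t'$. Since $L_{C_{t'}}(x)=\operatorname{LocalOK}_K(x,\pi_x(C_{t'}))$ and the catalog $K$ is fixed, we get $L_{C_{t'}}(x)=L_{C_t}(x)$, which is true. If one wants to rely only on the projection, equality $\pi_x(C_{t'})=\pi_x(C_t)$ already forces equal $\operatorname{LocalOK}_K$ values, because $\operatorname{LocalOK}_K$ is a function of the catalog and the projection alone.

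\textbf{Step 3 (closure and conclusion).} For every $x\in U$, each parent of $x$ is an ancestor of $e$ and so lies in $U$; thus $D(x)\subseteq U$. Together with Step 2 this gives $U\subseteq F_{C_{t'}}(U)$, so $U$ is a locally supported set closed under declared dependencies at $t'$. Either the maximality clause of Theorem~\ref{thm:greatest-admission}, or Knaster--Tarski directly (a post-fixed point lies below the greatest fixed point), yields $U\subseteq A_{t'}^\star$, and hence $e\in A_{t'}^\star$.

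The only delicate point is that the graph $G_R$, and hence the parent relation $D$, must be the same at $t$ and $t'$. This holds because $G_R$ is validated once from $\widehat R$ against the fixed catalog, while only $C_t$ mutates. Components outside $U$ may leave the retained set, but they do not affect membership of $e$ because $F$ checks only parents. Corollary~\ref{cor:irrelevant-drift} is the same statement, so the argument above proves it as well.
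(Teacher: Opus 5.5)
Your proof is correct, but it closes differently from the paper. The paper's two-sentence proof notes that local predicate results are unchanged on the finite ancestor closure. It then inducts from the roots down to $e$, applying the fixed-point equation $A_{t'}^\star=F_{C_{t'}}(A_{t'}^\star)$ node by node. Run as an equivalence (membership at $t$ iff membership at $t'$, given the same for all parents), that induction needs no analogue of your Step~1. It does, however, need acyclicity to make the root-to-$e$ induction well founded.

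You instead push membership in $A_t^\star$ from $e$ up to its ancestors, then show the upward-closed set $U$ is a post-fixed point, $U\subseteq F_{C_{t'}}(U)$, and conclude by maximality of the greatest fixed point. This replaces the node-by-node induction at $t'$ with a single coinductive step, relying on greatestness of $A_{t'}^\star$ rather than only the fixed-point equation. It also never actually uses acyclicity: your Step~1 is an induction on path length, which is well founded in any graph, so the appeal to acyclicity there is unnecessary. Your argument would therefore survive verbatim if locally supported cycles were admitted.

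On the acyclic graphs the paper assumes, $F_{C_{t'}}$ has a unique fixed point, so the two routes certify exactly the same thing.
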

\begin{proof}
The mutation preserves each local predicate result in the finite ancestor
closure. Induction from roots to $e$ preserves membership in the corresponding
fixed point, so $e$ remains retained.
\end{proof}

\begin{theorem}[Gate-boundary integrity]
\label{thm:gate-integrity}
Assume the final gate is the sole governed adapter entry and predicate
providers faithfully represent the state checked at the gate decision point.
If the component is outside the current dependency closure, a required phase
is unmet, its stable witness is invalid, or its dispatch authorization is
stale at $t_\ell$, no adapter call is made for that component instance.
Conversely, if adapter $G$ is called for $e$ at $t_\ell$, then
\begin{equation}
\label{eq:supp-gate-boundary-integrity}
\begin{aligned}
e\in A_{t_\ell}^\star
&\land\operatorname{DepsReady}_{t_\ell}(e)\\
&\land\operatorname{WitnessOK}_{t_\ell}(W_e)\\
&\land\operatorname{FreshAuth}_{t_\ell}(e,b_e)
\end{aligned}
\end{equation}
holds, and the token bound to $b_e$ authorizes at most one call.
\end{theorem}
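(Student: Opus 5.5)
The argument works directly from the gate's operational definition: Algorithm~\ref{alg:halo-protocol} and Step~7 of the executable admission procedure. It uses two structural facts. The first is the assumption that the final gate is the sole governed entry to adapter $G$. The second is that the recheck, token consumption, and adapter call all happen inside one gate-owned critical section, whose commit instant is $t_\ell$. Because the gate is the sole entry, every adapter call for an instance $e$ must arise from the branch of the algorithm in which $\operatorname{DispatchOK}_{t}(e)=1$ was evaluated. Because of the critical section, no other event can interleave between that evaluation and the call. So the evaluation instant coincides with the decision point $t_\ell$.

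I would prove the converse direction first. Suppose $G.\operatorname{dispatch}(e,b_e)$ is invoked. By sole entry, this invocation happens only in the \textbf{then}-branch following a successful evaluation of $\operatorname{DispatchOK}_{t_\ell}(e)$. By the definition in Eq.~\eqref{eq:dispatch-ok}, that evaluation is the conjunction $e\in A_{t_\ell}^\star\land\operatorname{DepsReady}_{t_\ell}(e)\land\operatorname{WitnessOK}_{t_\ell}(W_e)\land\operatorname{FreshAuth}_{t_\ell}(e,b_e)$. Within the critical section the gate refreshes $C_{t_\ell}$ and recomputes the closure. By Theorem~\ref{thm:greatest-admission}, this closure is exactly $A_{t_\ell}^\star$. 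Faithful providers ensure that the predicates evaluated by the gate equal the mathematical predicates on the true state at $t_\ell$. Hence all four conjuncts hold simultaneously at $t_\ell$.

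The forward direction is the contrapositive of the converse. If any conjunct fails at $t_\ell$, the gate takes the \textbf{else}-branch: it withholds $e$ and possibly creates $o_e$. Neither action calls $G$. Since no other path reaches $G$, no call is made.

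For the single-use claim, I would argue from the trusted ledger. The token bound to $b_e$ is consumed atomically inside the critical section before the call. $\operatorname{FreshAuth}$ requires that the token is unused. Suppose two adapter calls were made for the same $b_e$. Order them by their critical sections, which are serialized because they are gate-owned. The second gate evaluation would then observe the token as consumed, so $\operatorname{FreshAuth}$ would be false, contradicting the converse just proved. A modified or replayed payload does not escape this argument. It either changes the digest $d_e$, and hence $W_e$, which makes $\operatorname{WitnessOK}$ false, or it reuses $b_e$, which meets the consumed token.

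The main obstacle is the atomicity step: justifying that the state the gate checks is the same state at which the call commits, so that "simultaneously at $t_\ell$" is warranted. I would handle this by defining $t_\ell$ as the commit instant of the critical section and invoking the faithful-provider assumption at that instant. Ruling out distributed races or provider lag is outside the stated boundary.
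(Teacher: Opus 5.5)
Your proposal is correct and follows essentially the same argument as the paper. Any mismatch falsifies a conjunct of $\operatorname{DispatchOK}_{t_\ell}(e)$, so the gate denies before token consumption. A successful check consumes the token and calls $G$ inside the gate-owned critical section. Sole entry plus the one-dispatch token then rule out alternate paths and repeat calls. Your serialization argument for the at-most-one-call claim, and your appeal to Theorem~\ref{thm:greatest-admission} to identify the recomputed closure with $A_{t_\ell}^\star$, make explicit steps that the paper's short proof leaves implicit.
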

\begin{proof}
Every checked mismatch falsifies a conjunct of
$\operatorname{DispatchOK}_{t_\ell}(e)$. The gate therefore denies before
token consumption and the adapter call. When every conjunct holds, the gate
commits the decision, consumes the component-bound token, and invokes $G$
within the gate-owned critical section. Because the gate is the adapter's sole
entry and the token authorizes one dispatch, a denied instance has no alternate
governed path and an accepted binding authorizes at most one call.
\end{proof}

\begin{theorem}[No revival of prior authorization]
\label{thm:no-prior-revival}
Let $b_b$ be a withheld, undispatched binding. If recovery passes through the
obligation process and governed dispatch uses the same final gate, then
$b_b$ cannot authorize a later dispatch.
\end{theorem}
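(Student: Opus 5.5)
The argument is by contradiction, reduced to Theorem~\ref{thm:gate-integrity}. Suppose some later adapter call is authorized by $b_b$. Because the final gate is the sole governed adapter entry, that call happens at a decision point $t_\ell$ of the gate. By Theorem~\ref{thm:gate-integrity}, $\operatorname{FreshAuth}_{t_\ell}(e,b_b)$ must then hold, and $b_b$'s one-dispatch token must be unused and tied to the current admission generation. It therefore suffices to show that after $b_b$ is withheld, no path makes $\operatorname{FreshAuth}_{t}(\cdot,b_b)$ true again.

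First, consider the withholding event. The denial occurs inside the gate-owned critical section, or through an event-driven invalidation. It moves the instance to $\mathsf{WITHHELD}$ or $\mathsf{INVALIDATED}$ and records this in the trusted ledger. Two ledger properties are needed, and the supplement's ledger treats both as trusted: the ledger is monotone, so no operation resets a token from used or revoked back to fresh; and token freshness is checked against the current generation of the component lineage.

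Second, consider the recovery path. By the obligation state machine in Eq.~\eqref{eq:supp-obligation-states}, no transition, including $\mathsf{DISCHARGED}$, grants dispatch authority. Eq.~\eqref{eq:supp-readiness-not-dispatch} makes this explicit. The only outcome is a fresh candidate that returns to Step~1 of the executable procedure. That step assigns a later generation, a new token, and an unused sequence $\operatorname{seq}'\neq\operatorname{seq}_b$. The candidate's binding is $b'=H(W',\operatorname{seq}')$. Assuming $H$ is collision-resistant, or simply that the ledger keys on distinct sequence numbers, $b'\neq b_b$. So any dispatch after recovery is authorized by $b'$, not $b_b$; this is consistent with Proposition~\ref{prop:scoped-recovery}.

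Third, the stale binding cannot be presented directly. If someone submits $b_b$ to the gate, either $\operatorname{FreshAuth}$ fails, because the generation is superseded or the token is revoked, or $\operatorname{WitnessOK}$ fails, because the bound support changed. By Theorem~\ref{thm:gate-integrity}, no adapter call follows.

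The main obstacle is the ledger assumption. Nothing in the main text explicitly states that a withheld but unconsumed token is revoked, rather than merely left unused. If it is only left unused, an undispatched $b_b$ could in principle pass $\operatorname{FreshAuth}$ once its support returns. I would close this gap by reading $\operatorname{FreshAuth}$ as requiring the current admission generation. Readmission, whether through recovery or through the event-triggered recomputation in Algorithm~\ref{alg:halo-protocol}, strictly increases the generation. The withholding step then implicitly retires $b_b$. I would state this generation-monotonicity explicitly as part of the trust model before running the argument above.
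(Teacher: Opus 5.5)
Your proposal is correct and follows essentially the same route as the paper: obligation transitions issue no token, and fresh readmission brings a later generation, a new one-dispatch token and an unused sequence, while the final gate accepts only the current instance with its unused current-generation token, so any recovered dispatch must use a binding other than $b_b$. The gap you flag is already closed by the supplement, which states that fresh admission revokes outstanding previous-generation tokens and that the gate checks the generation before token consumption; you therefore do not need the extra claim, which the paper does not support, that the event-triggered recomputation in Algorithm~\ref{alg:halo-protocol} also increments the generation.
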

\begin{proof}
Obligation transitions issue no token. Fresh admission increments the
generation, rejects every prior sequence, and revokes prior-generation tokens.
It may retain certificate bytes only when the certificate remains current and
bound to the exact component. The final gate accepts only the current component
instance and its unused current-generation token, so any recovered dispatch
must use $b_r\neq b_b$.
\end{proof}

\begin{theorem}[Recovery locality]
\label{thm:recovery-locality}
Let \(S(o_e)\) be the replacement scope carried by obligation \(o_e\). Discharging
\(o_e\) creates no new binding, token, admission generation, or recovered
component instance for any \(u\notin S(o_e)\). A component outside the scope
may advance only under its existing binding and the ordinary dependency
rules.
\end{theorem}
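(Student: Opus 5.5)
The argument examines, one by one, the transitions that can create the four objects named in the statement: bindings, tokens, admission generations, and component instances. For each, it shows that discharging $o_e$ can only produce one whose target lies in $S(o_e)$. The protocol has exactly two sources of such objects. The first is the admission step (Step~5 of the executable procedure and the \textbf{Admission} line of Algorithm~\ref{alg:halo-protocol}), which runs on a validated bundle. The second is the \textbf{Recovery} line, which readmits a candidate ``within $S(o_e)$''. Obligation transitions themselves, $\mathsf{CREATED}\to\cdots\to\mathsf{DISCHARGED}$, issue no token or binding. This is recorded in Eq.~\eqref{eq:supp-readiness-not-dispatch} and already used in the proof of Theorem~\ref{thm:no-prior-revival}.

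Next comes the recovery path. Discharge makes a candidate available, and that candidate re-enters Step~1. The first claim is that the enforced replacement-scope boundary, which is an assumption of the Protocol Properties, rejects any fresh candidate whose identity (replacing component) is not in $S(o_e)$. Hence every new generation, fresh token, sequence and binding $b_r=H(W_r,\operatorname{seq}_r)$ created along this path belongs to a component in $S(o_e)$. No token can move to another component, because each token is component-bound and $b_r$ hashes the candidate's own digest $d_r$. Anything derived for $u\notin S(o_e)$ would therefore need $d_u$ to enter a recovery witness, and the scope check forbids that.

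Then comes the second sentence of the statement. Take $u\notin S(o_e)$. It gets no new binding, so its only dispatch authority is its existing $b_u$ and token, and by Theorem~\ref{thm:gate-integrity} it can reach $G$ only if $\operatorname{DispatchOK}$ holds for that binding. The remaining question is whether the readmitted replacement can change $u$'s progress. It can do so only through $\operatorname{DepsReady}$, that is, through the ordinary phase rules of Eq.~\eqref{eq:supp-dependency-readiness}, and only if $u$ declares a dependency on the component in scope. Recovery leaves the processing state of components outside the scope untouched, so $u$ advances under its existing binding and the ordinary dependency rules.

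The hard step is the dependent case. Suppose $u$ depends on the blocked $e$ and was withheld by closure. Once $e$ is replaced, the declared parent identity of $u$ no longer refers to a live instance. So I must show that $u$ is either inside $S(o_e)$ (when the catalog permits replacing dependents) or stays withheld and needs its own fresh admission. It must not silently re-bind to the new parent. My first attempt is to use the fact that dependency references are validated against component identifiers in $G_R$, together with the digest binding $d_e$. A new parent instance then breaks $u$'s witness, $\operatorname{WitnessOK}$ fails, and $u$ cannot advance under recovered authority.
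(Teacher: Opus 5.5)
Your first three paragraphs are essentially the paper's argument. Obligation transitions issue no authority, and the obligation manager accepts a fresh candidate only if its identifier lies in $S(o_e)$. A component outside the scope keeps only its existing binding and can move only through ordinary parent-phase or support events. One step should be stated as a protocol property rather than inferred. Rejecting out-of-scope candidates does not by itself give your ``Hence'': if re-entering Step~1 reprocessed the bundle, Step~5 would issue a binding and token for every retained action-bearing component in the recomputed closure. The paper closes this by stipulating two things. The fresh-admission transition updates only the accepted candidate identifiers, so generations are component-local. A trusted dispatcher passes the validated candidate set unchanged to fresh admission. You use this implicitly when you say recovery leaves out-of-scope processing state untouched, but it is also what licenses your claim about bindings, tokens, and generations.

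The real problem is the final paragraph. The statement does not claim that a dependent $u$ of $e$ stays withheld or needs its own readmission. It only denies $u$ new authority objects, and the paper's proof explicitly allows ordinary parent-phase events to advance out-of-scope components under existing bindings. So if the fresh instance reaches the required phase and $u$ then resumes via closure recomputation and $\operatorname{DepsReady}$, that is consistent with the theorem, not a case you must exclude.

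The mechanism you propose also fails. $W_u=\operatorname{CanonWitness}_K(d_u,\beta_u(C_{t_a}),\operatorname{certificate}_u,v_K)$ contains only $u$'s own digest, stable support fields, certificate, and catalog version. None of these encodes the parent's instance or generation, so replacing $e$ does not falsify $\operatorname{WitnessOK}(W_u)$. A non-action dependent such as $e_4$ has no witness or binding at all. Whichever way the replacement is identified, what governs $u$ is dependency closure and phase readiness, not the witness check. Drop that paragraph; the theorem follows from the first three once fresh admission is made explicitly candidate-local.
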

\begin{proof}
The obligation manager accepts fresh candidates only when their component
identifiers are contained in \(S(o_e)\), and obligation transitions themselves
issue no dispatch authorization. The fresh-admission transition updates
only the accepted candidate identifiers. A trusted dispatcher passes that
validated candidate set unchanged to fresh admission. Components outside the
set receive no new recovery authority, although ordinary parent-phase or
support events may still advance or invalidate them under existing bindings.
\end{proof}

Together, Theorems~\ref{thm:no-prior-revival}
and~\ref{thm:recovery-locality} establish the main-paper proposition on fresh
scoped recovery.

\paragraph{Protocol-conformance cohorts.}
The fixed admission oracle contains 96 predefined outcomes spanning local
support, malformed references, dependency removal, and maximal coherent
retention. The graph oracle adds 135 exhaustively enumerated graphs with
4, 8, or 16 components and 45 independently checked graphs with 32
components. A separate protocol-transition suite contains 20 hand-authored transition
cases and 10,000 seeded valid or fault-injected schedules covering event
ordering, final rechecks, token nonreuse, terminal parent failure, and scoped
recovery. The implementation matched every expected fixed outcome, graph
oracle, protocol-transition case, and generated schedule. These are deterministic
protocol-conformance checks rather than estimates of field failure
probability.

\begin{table*}[t]
\centering
\small
\setlength{\tabcolsep}{3pt}
\begin{tabular*}{\textwidth}{@{\extracolsep{\fill}}lll@{}}
\toprule
Main-paper question & Primary comparison & Evidence boundary \\
\midrule
RQ1: mechanism necessity & reduced variants and AgentSpec controls & component trace and PX4 setpoint \\
RQ2: protocol conformance & fixed cases, graph oracles, and schedules & protocol decisions \\
RQ3: fresh scoped recovery & recovery oracle, profiles, and UAV operation & new binding and sequence \\
RQ4: runtime cost & graph and gate size sweeps & protocol-only computation \\
Supplementary transfer & six repository variants & sandboxed merge ancestry \\
\bottomrule
\end{tabular*}
\caption{Mapping from evaluation questions to comparisons and observation
boundaries.}
\label{tab:supp-claim-map}
\end{table*}

\section{Evaluation Details}

\subsection{Live Structured-Agent Stress Check}
This cohort is a repetition-based implementation stress check, not a response-
diversity or generalization benchmark. It contains 100 OpenClaw Server-Sent
Events responses for one controlled four-component template. The collector saves a
terminal event, response provenance digest, parsed envelope, completion ledger,
and paired-replay linkage without storing the gateway token. Every accepted
response is replayed unchanged over 12 conditions and four variants, producing
4,800 paired rows.
Valid kept counts supported components, Stale dispatched counts stale action
effects, and Dependency leak counts dependency-inconsistent host-runtime notices.

\begin{table}[t]
\centering
\small
\setlength{\tabcolsep}{2pt}
\begin{tabular*}{\columnwidth}{@{\extracolsep{\fill}}lrrr@{}}
\toprule
Variant & Valid kept & Stale dispatch & Dep. leak \\
\midrule
WholeResponse & 400/2400 & 0/1000 & 0/1000 \\
IndependentFilter & 2400/2400 & 0/1000 & 1000/1000 \\
HALO-Admission & 2400/2400 & 500/1000 & 0/1000 \\
\textbf{HALO-Full} & \textbf{2400/2400} & \textbf{0/1000} & \textbf{0/1000} \\
\bottomrule
\end{tabular*}
\caption{Repeated fixed-template stress check. One hundred response
repetitions share a controlled structure so that paired mechanism effects can
be measured directly.}
\label{tab:supp-live-agent}
\end{table}

\subsection{Structured-Response Cohorts}
The diversity cohort is separate from the fixed-template cohort above. It uses
300 live OpenClaw calls across 15 frozen response-family schemas. The
collection funnel is \(300\rightarrow274\) schema-valid \(\rightarrow226\)
replay-eligible \(\rightarrow7{,}910\) paired replay rows. All accepted records
have validated provenance and unique endpoint response identifiers. The 74
rejected records comprise 28 family-schema mismatches, 20 missing-dependency
schema failures, 20 dependency-cycle failures, and six unknown-type schema
failures. They remain in the failure ledger and do not enter replay. This
cohort measures structural variation within one controlled endpoint and model,
not naturally occurring response distributions or cross-model generalization.

The endpoint reported the model identifier \texttt{openclaw}; it did not expose
a more specific model revision. Each of the 15 families received 20 calls using
its frozen prompt-state template and response schema. Temperature was omitted
from the request, the endpoint exposed no sampling-seed field, and the collector
used a 90-second timeout with at most two transient retries and a one-second
backoff. The separate 62-response AgentSpec cohort contains every
provenance-valid, schema-valid, family-schema-conforming record from its
frozen source collection; no outcome-based subsampling was performed.

\begin{table}[t]
\centering
\small
\setlength{\tabcolsep}{2pt}
\begin{tabular*}{\columnwidth}{@{\extracolsep{\fill}}P{0.29\columnwidth}P{0.18\columnwidth}rP{0.29\columnwidth}@{}}
\toprule
Cohort & Input & Eligible & Evaluation unit \\
\midrule
Response diversity & 300 calls & 226 & 7,910 paired policy evals. \\
AgentSpec comparison & 62 responses & 62 & 248 supported components \\
Independent-valid subset & 62 responses & 62 & 128 components \\
\bottomrule
\end{tabular*}
\caption{Structured-response cohort accounting.}
\label{tab:supp-cohort-accounting}
\end{table}

\begin{table}[t]
\centering
\footnotesize
\setlength{\tabcolsep}{2pt}
\begin{tabular*}{\columnwidth}{@{\extracolsep{\fill}}P{0.32\columnwidth}P{0.25\columnwidth}P{0.33\columnwidth}@{}}
\toprule
Variant & Retained mechanism & Removed or changed mechanism \\
\midrule
WholeResponse & bundle decision & partial preservation \\
AgentSpec-Global & global state rule & local support footprint \\
AgentSpec-Scoped & post-mutation scoped rules & no native dispatch-process test \\
GlobalSnapshot & proposal snapshot & local support footprint \\
IndependentFilter & local support & dependency closure \\
Local-Unbound & current predicates & exact admission binding \\
HALO-Admission & component admission & final dispatch gate \\
\textbf{HALO-Full} & \textbf{all HALO mechanisms} & \textbf{none} \\
\bottomrule
\end{tabular*}
\caption{Baseline definitions.}
\label{tab:supp-variants}
\end{table}

The late-policy replay freezes the baseline definitions before reading results.
GlobalSnapshot checks a proposal-wide current snapshot. Local-Unbound checks
the component-local freshness footprint but omits digest-bound epoch and bounds
provenance. The two are separated because their expected failure modes differ.
IndependentFilter evaluates current component-local support at the same
decision boundary as HALO but ignores declared dependencies.
AgentSpec-Scoped evaluates explicitly encoded component predicates at the
post-mutation decision point; it measures policy expressiveness rather than
AgentSpec's native dispatch process.
AgentSpec can therefore express equivalent admission predicates when they are
configured explicitly. HALO instead defines a coupled component-level protocol
that connects dependency-consistent admission, exact-instance dispatch
authority, late revalidation, and scoped fresh readmission across one
effect-bearing response.

\begin{table}[t]
\centering
\footnotesize
\setlength{\tabcolsep}{0.25pt}
\begin{tabular*}{\columnwidth}{@{\extracolsep{\fill}}lrrrr@{}}
\toprule
Method & Kept & Epoch & Bounds & Orphan \\
\midrule
AgentSpec-Global & 0/248 & 0/72 & 0/72 & 0/70 \\
AgentSpec-Scoped & 248/248 & 0/72 & 0/72 & 0/70 \\
IndependentFilter & 248/248 & 0/72 & 0/72 & 70/70 \\
HALO-Admission & 248/248 & 72/72 & 72/72 & 0/70 \\
\textbf{HALO-Full} & \textbf{248/248} & \textbf{0/72} & \textbf{0/72} & \textbf{0/70} \\
\bottomrule
\end{tabular*}
\caption{Structured-response comparison on 62 live envelopes. AgentSpec-Scoped
is the matched admission-time control, isolating initial predicate
expressiveness from HALO's later binding, gate, and recovery process.}
\label{tab:supp-generic}
\end{table}

These controlled prompts and schemas test structural variation across 15
response families while holding the model endpoint and replay conditions
fixed.

\subsection{Final-Gate and Authorization Controls}
The protected dispatch adapter uses one process-local critical section:
read the current component projection, validate applicable predicates, consume
a one-dispatch token, and enter the sole enqueue adapter. In six deterministic
cases, a post-validation mutation produced zero stale enqueues; only one of two
concurrent consumers could enqueue; used-token, restart-token, and
duplicate-sequence replay were rejected. A failed enqueue consumes its token
and requires fresh admission. This supports the gate-owned critical-section
behavior described in the main paper, not distributed atomicity or remote
exactly-once delivery.

The AgentSpec reference-interpreter comparison contains six scenarios with ten
warm repetitions. Policy-only AgentSpec produced 40/40 post-decision stale
internal fingerprints and allowed 10/10 token replays. Adding HALO's final
binding, component-bound one-dispatch token enforcement, and fresh recovery
protocol produced zero stale fingerprints, blocked all ten replay attempts,
and completed 20/20 fresh recoveries, matching HALO-Full at the measured PX4
boundary.

Each token is also bound to a component-local admission generation. Fresh
admission accepts only a failed or invalidated component instance, rejects every prior
sequence identity, revokes outstanding tokens from the previous generation,
and increments that generation. The gate checks the current component
instance and generation before token consumption. Consequently, obligation
readiness cannot revive the prior authorization: any recovered dispatch must
use current evidence, a later admission generation, a fresh token, and an
unused sequence.

For each recoverably withheld action-bearing component, the adapter can create
\begin{equation}
\label{eq:supp-obligation-record}
o_e=(e,r,S,Q,a,\rho,\tau,d),
\end{equation}
where \(r\) is the reason, \(S\) the required support, \(Q=S(o_e)\) the replacement
scope, \(a\) the assignee, \(\rho\) the route, \(\tau\) the expiry, and \(d\)
the discharge condition.
\(\operatorname{Exists}(o_e)\not\Rightarrow\operatorname{Dispatch}(e)\): an
obligation does not authorize the withheld effect.

Retry scheduling is catalog-defined. HALO provides the obligation state,
expiry, timeout and terminal exits, and the hook through which a candidate may
be resubmitted; every retry remains inside \(S(o_e)\) and repeats ordinary
admission with a new one-dispatch token and unused sequence.

\subsection{Cross-Domain Portability Check}
This check asks whether HALO can be instantiated at a versioned software
boundary without losing the tested preservation, blocking, and fresh recovery
behavior. We attach the unchanged core admission, binding, final-gate, and
recovery logic to a sandboxed bare-Git merge adapter. Git already supplies versioning, ancestry,
and merge semantics; HALO's role is only to decide whether an agent-returned
merge component, its dependent notice, and any recovered candidate may reach
that adapter under current external evidence and authorization.

The repository panel
contains 12 predefined cases, six protocol variants, and ten repetitions,
yielding 720 executions. Cases cover all-valid operation, unrelated branch or
issue updates, changed target head or patch digest, changed or failed
continuous-integration evidence, revoked approval, changed branch protection,
merge conflict, a dependent completion notice after a blocked merge, and fresh
recovery. Stale target-head, CI, approval, branch-protection, and patch-digest
cases withhold the merge candidate and its dependent
\texttt{merged\_notice}; catalog routes request rebase or replanning, fresh CI,
new approval, protection refresh, or patch-digest validation. Across the
complete cohort, Full HALO retained 40/40 unaffected outputs, produced 0 stale
merge observations and 0 dependent-notice leaks, and completed 10/10 fresh
readmissions. The reduced variants exposed the intended failures:
Local-Unbound produced 40 stale merges, HALO-Admission produced 90,
No-Dependency produced 90 dependent-notice leaks, and GlobalSnapshot falsely
denied 20/30 fresh controls. Git ancestry provides the downstream observation
of which merge commit, if any, was created. This is a sandboxed bare-Git
effect-boundary result, not a hosted-forge or distributed-transaction claim.

The effect-boundary catalog maps each represented type to its first protected
downstream effect: operator queue insertion, evidence-provider dispatch, handoff
enqueue, sandboxed merge enqueue, PX4-bound adapter enqueue, or status-channel
enqueue. A seeded 10,000-schedule property panel found zero tested locality,
local-validity, dependency-closure, greatest-set, token-nonreuse, or
blocked-parent-emission violations. It is implementation validation, not a
failure-probability estimate.

\subsection{PX4/Gazebo Observation Boundary}
The live \pxfour{}/\gazebo{} baseline panel \cite{meier2015px4,koenig2004gazebo} has 12 conditions and four
variants, for 480 case--variant executions across ten complete cold-start
SITL sessions in one fixed environment. A
predeclared NED-position-plus-yaw fingerprint links a component trace to an
internal \texttt{trajectory\_setpoint} observation. \pxfour{} does not carry a
native \method{} case identifier. Independent notices and dependency-inconsistent
notices are therefore evaluated at the HALO component-runtime boundary; only
action-bearing segments use the internal topic.

The late-drift experiment uses the following order:
\begin{equation}
\label{eq:supp-race-timeline}
\begin{aligned}
t_a\!:\ \text{admission}&\longrightarrow t_m\!:\ \text{mutation},\\
t_m&\longrightarrow t_c\!:\ \text{final gate},\\
t_c&\longrightarrow t_p\!:\ \text{host enqueue},\\
t_p&\longrightarrow t_x\!:\ \text{ULog observation}.
\end{aligned}
\end{equation}
The experiment requires $t_a<t_m<t_c$ for the injected epoch and bounds
mutations. Host timestamps $(t_a,t_m,t_c,t_p)$ and PX4 boot-time $t_x$ remain
different clocks. Alignment is used only to attribute broad process phases;
the internal fingerprint itself is matched directly in ULog coordinates.

\begin{table}[t]
\centering
\footnotesize
\setlength{\tabcolsep}{0.25pt}
\begin{tabular*}{\columnwidth}{@{\extracolsep{\fill}}lrrrr@{}}
\toprule
Variant & Kept & Stale & Dep. & Valid block \\
\midrule
WholeResponse & 40/240 & 0/50 & 0/100 & 200/200 \\
IndependentFilter & 240/240 & 0/50 & 100/100 & 0/200 \\
HALO-Admission & 240/240 & 50/50 & 0/100 & 0/200 \\
\textbf{HALO-Full} & \textbf{240/240} & \textbf{0/50} & \textbf{0/100} & \textbf{0/200} \\
\bottomrule
\end{tabular*}
\caption{PX4 results across ten cold starts. Stale dispatch counts
fingerprint-matched internal setpoints; dependency leaks are host notices.}
\label{tab:supp-px4-baseline}
\end{table}

The analysis observed all 170 expected internal fingerprints and no matching
fingerprint for any of 350 denied candidates. The 20 Full-\method{} recoveries
were observed only with fresh sequence identifiers. Hold intervals use direct
PX4 target fingerprints; execution and recovery use coarse host--PX4 process-phase
alignment, whose maximum residual was 108.04 ms. The alignment locates broad
process phases, while the internal setpoint evidence is matched directly in
PX4 ULog coordinates.

Across the ten cold sessions, required ULog topics and selected structured
fields were present. All 350 denied hold acquisitions and their steady
observations met the preregistered error and drift limits, and no selected true
status flag was attributed to an execution, deny-hold, or recovery interval.
Status samples outside those intervals remain separately reported in the
artifact rather than being counted as runtime-clean evidence.

\paragraph{UAV support-profile replication.}
A separate panel repeated 11 AoI conditions and 30 FAA Part~107-informed
engineering-profile conditions across ten cold-start sessions. The 41
unique conditions yielded 410 replays with 410/410 expected outcomes, no
matching internal setpoint for 250 denied actions, matching internal setpoints
for all 160 admitted actions, and fresh dispatch sequences for all 60
recoveries. The AoI subset contributed 110 replays, 0/50 denied setpoints,
60/60 admitted setpoints, and 10/10 fresh sequences; the FAA subset contributed
300 replays, 0/200 denied setpoints, 100/100 admitted setpoints, and 50/50
fresh sequences.

\paragraph{Physical Crazyflie operation.}
The main-paper Crazyflie~2.1 sequence instantiates the same runtime process with a
forward action and an independent downward-range report. Across ten repeated
operations, HALO withheld 10/10 stale forward actions, emitted 10/10 range
reports, observed 0/10 matching stale setpoints, and returned 10/10 recovery
candidates through fresh readmission. The representative trace shows initial
retention, a support change before the governed effect, selective withholding
of the action while the report remains available, and a fresh
backtrack-and-land candidate. The Crazyflie photographs visualize this sequence; the
formal evidence remains the component trace, adapter decision, and matching
setpoint check.

\subsection{Constrained-Motion Evidence}
The constrained-motion panel uses 14 scenarios with 20 warm repetitions in one
fixed \pxfour{}/\gazebo{} configuration. It records 200 denied and 80 admitted
or recovered segments. There were zero denied adapter calls, zero denied
matching target-telemetry observations, 40/40 fresh recoveries, 40/40 detected
post-final epoch or bounds mutations, and 280/280 oracle coverage. These warm
repetitions characterize timing and configuration stability under the fixed
PX4/Gazebo configuration.

The separate post-final race panel removes one binding mechanism at a time.
Within its tested epoch and bounds mutations, removing the final check, epoch
binding, or bounds binding selectively exposes the corresponding stale
component internally, while Full \method{} blocks it. This is controlled
ablation evidence at the measured boundary; it is not a general formal proof.

\begin{table}[t]
\centering
\footnotesize
\setlength{\tabcolsep}{1.5pt}
\begin{tabular*}{\columnwidth}{@{\extracolsep{\fill}}P{0.36\columnwidth}rrrr@{}}
\toprule
Measure & $n$ & Median (ms) & p95 (ms) & Max (ms) \\
\midrule
Validation & 280 & 0.325 & 0.491 & 23.205 \\
Final recheck & 280 & 0.005 & 0.010 & 0.030 \\
Gate to adapter & 280 & 0.005 & 0.011 & 0.031 \\
Adapter loop & 80 & 1577.832 & 1747.882 & 1764.751 \\
Motion completion & 80 & 1577.811 & 1747.859 & 1764.739 \\
Target activation & 79 & 1039.377 & 1155.511 & 1255.655 \\
Fresh recovery & 40 & 1649.957 & 1763.999 & 1765.188 \\
\bottomrule
\end{tabular*}
\caption{Constrained-motion latency decomposition using the floor-index
quantile estimator. Long adapter and completion intervals include setpoint
transmission, fingerprint dwell where applicable, host polling, telemetry,
PX4/\gazebo{} response, and target convergence; they are not HALO protocol
computation overhead.}
\label{tab:supp-latency}
\end{table}

\subsection{Recovery Cost}
The fixed recovery panel contains 48 routes. Thirty-six routes are executable
after refreshed support, and 12 are terminal by catalog policy. Table
\ref{tab:supp-recovery} reports the final-state recovery oracle. Old-token,
prior-sequence, and out-of-scope candidate rejection are tested separately in
the deterministic protocol suite. These controls rejected five
prior-authorization reuse attempts and one out-of-scope recovery candidate.
\begin{table*}[t]
\centering
\footnotesize
\setlength{\tabcolsep}{2.2pt}
\renewcommand{\arraystretch}{1.02}
\begin{tabular*}{\textwidth}{@{\extracolsep{\fill}}lcrrrrrr@{}}
\toprule
Measured interval & Layer & $n$ & Min & Median & Mean & p95 & Max \\
\midrule
Response validation & Protocol & 1,212 & 0.0018 & 0.0063 & 0.0105 & 0.0349 & 0.0677 \\
Local-support aggregation & Protocol & 1,212 & 0.0004 & 0.0010 & 0.0012 & 0.0036 & 0.0061 \\
Dependency closure & Protocol & 1,212 & 0.0059 & 0.0203 & 0.0266 & 0.0716 & 0.1267 \\
Retained-set materialization & Protocol & 1,212 & 0.0003 & 0.0008 & 0.0008 & 0.0015 & 0.0037 \\
Binding-record creation & Protocol & 1,212 & 0.0209 & 0.0850 & 0.0922 & 0.2063 & 0.3437 \\
Admission total & Protocol & 1,212 & 0.0348 & 0.1242 & 0.1448 & 0.3381 & 0.5344 \\
Fresh readmission total & Protocol & 1,212 & 0.0343 & 0.1233 & 0.1448 & 0.3415 & 0.5838 \\
Final gate, pre-token & Protocol & 10,836 & 0.0038 & 0.0123 & 0.0157 & 0.0352 & 0.5967 \\
Domain-predicate evaluation & Domain & 1,212 & 0.0016 & 0.0059 & 0.0062 & 0.0125 & 0.0295 \\
\bottomrule
\end{tabular*}
\caption{Protocol and domain-predicate compute in milliseconds. Values use the
nearest-rank p95 and retain all measured samples. Model generation, candidate
production, token mutation, adapter callbacks, PX4 activation and motion,
polling, and downstream transport are outside these intervals. The
reported values therefore isolate the protocol and domain-predicate layers.}
\label{tab:supp-protocol-compute}
\end{table*}

\begin{table*}[t!]
\centering
\small
\setlength{\tabcolsep}{4pt}
\begin{tabular*}{\textwidth}{@{\extracolsep{\fill}}P{0.19\textwidth}P{0.34\textwidth}P{0.40\textwidth}@{}}
\toprule
Boundary & Trusted or enforced & Claim boundary \\
\midrule
Raw structured proposal
& Untrusted; it has no retention or dispatch authority
& Free-form semantic decomposition or intent recovery \\
Schema, canonicalization, catalog, and dependency compiler
& Trusted computing base; malformed or mismatched inputs fail closed
& Completeness of types, payload detection, or generic dependency compilation \\
Certificates, predicates, epochs, and bounds
& Trusted inputs checked against the current component footprint
& Truth, availability, or completeness of providers \\
Token ledger and final gate
& Component-bound one-dispatch token enforcement at one process-local gate
& Distributed transaction atomicity or exactly-once PX4 execution \\
PX4 ULog fingerprints and clocks
& Measured internal-topic observation; direct hold markers and coarse process-phase alignment
& Full PX4 queue coverage or a shared host/PX4 clock \\
Controller and physical world
& Downstream of governed adapter dispatch
& Controller correctness, collision avoidance, or physical-flight safety \\
\bottomrule
\end{tabular*}
\caption{Trusted inputs, enforced boundaries, and the corresponding scope of
HALO's claims.}
\label{tab:supp-trust-boundary}
\end{table*}
\begin{table}[t]
\centering
\small
\begin{tabular*}{\columnwidth}{@{\extracolsep{\fill}}lr@{}}
\toprule
Outcome & Result \\
\midrule
Final-state oracle agreement & 48/48 \\
Fresh executable recovery & 36/36 \\
Terminal safe no-op & 12/12 \\
Unauthorized dispatch & 0/48 \\
Stale executable recovery & 0/36 \\
\bottomrule
\end{tabular*}
\caption{Fresh component-local recovery outcomes.}
\label{tab:supp-recovery}
\end{table}
\subsection{Protocol Computational Cost}

The computational-cost panel separates HALO protocol work from
domain-specific predicate evaluation and downstream transport or execution.
Table~\ref{tab:supp-protocol-compute} reports pooled descriptive samples across
the tested size configurations. Admission total begins at response validation
and ends after binding-record creation, using predicate results computed before
the timed interval. Fresh readmission begins after candidate production and
ends after creation of the later-generation binding. Final-gate compute begins
at gate entry and ends immediately before token consumption; token mutation and
the adapter callback are excluded.

All new intervals use \texttt{time.perf\_counter\_ns} with a reported
one-nanosecond clock resolution. Admission and readmission values are per-call
averages from batches of 50 calls after 20 explicit warm-up calls; garbage
collection is disabled only within timed batches. Gate configurations use 301
direct timed calls after 50 warm-up calls. These measurements are steady-state
Python-process measurements and do not mix PX4 cold starts. No benchmark
invariant or execution failures occurred, and all measured samples, including
the maximum gate observation, are retained.

The size sweep varied $|V|\in\{4,8,16,32\}$ and graph density, reaching
$|D_\phi|=200$ at $|V|=32$. The largest configuration-level admission median
was 0.333\,ms, with a largest p95 of 0.453\,ms. A median regression of the form
\[
\operatorname{median}_{\rm adm}
=\alpha+\beta_V|V|+\beta_E|D_\phi|
\]
yielded $R^2=0.9978$. The gate sweep varied
$|D_\phi(e)|\in\{0,1,2,4,8,16\}$ and
$|\pi_e(C_t)|\in\{1,2,4,8,16,32\}$. Its largest median pre-consumption compute
was 0.035\,ms and largest p95 was 0.061\,ms; the corresponding two-variable
median regression yielded $R^2=0.9975$. The observed scaling closely tracks
the source-level bounds
$O(|V|+|D_\phi|)$ for admission after predicate evaluation and
$O(|D_\phi(e)|+|\pi_e(C_t)|)$ for the final gate.

Table~\ref{tab:supp-protocol-compute} reports the shared core timing panel.
The repository check supplies transfer evidence, while the 280-run PX4
validation interval measures domain tube validation. Adapter activation,
motion, and recovery-total intervals are reported separately as downstream
system behavior.

\section{Reproduction and Validity}

\subsection{Artifact Reproduction}
The submitted artifact contains raw JSONL/CSV traces, ULog analyses, JUnit XML,
console logs, exact commands, environment records, and SHA-256 manifests. The
claim-evidence matrix records each claim's evidence layer, observation boundary,
sample count, supported wording, and scope.

The release launcher captures the base commit, porcelain status, binary diffs,
submodules, untracked-file hashes, build settings, binary hash, logger profile,
and launch allowlist before build and launch. Manifest-complete runs anchor the
reproduction package; earlier session-specific traces remain available as
auxiliary evidence.

\subsection{Threats to Validity}

The raw structured proposal is untrusted; the schema, canonicalization, catalog,
and declared-dependency compiler are trusted computing-base components.
The process-local token ledger and final gate define the enforced dispatch
boundary. The FAA-anchored profiles evaluate the represented operational
predicates, and PX4 status analyses use the preregistered runtime intervals
associated with each component trace.

\clearpage
\bibliography{aaai2027}

\end{document}